\newtheorem{theorem}{Theorem}[]
\newtheorem{assumption}{Assumption}[]
\newtheorem{lemma}{Lemma}[]
\theoremstyle{definition}                  
\title{Renormalization Group Guided Tensor Network Structure Search}
\author{
    Maolin Wang\textsuperscript{\rm 1},
    Bowen Yu\textsuperscript{\rm 1},
    Sheng Zhang\textsuperscript{\rm 1},
    Linjie Mi\textsuperscript{\rm 1},
    Wanyu Wang\textsuperscript{\rm 1},\\
    Yiqi Wang\textsuperscript{\rm 2},
    Pengyue Jia\textsuperscript{\rm 1},
    Xuetao Wei\textsuperscript{\rm 3},
    Zenglin Xu\textsuperscript{\rm 4},
    Ruocheng Guo\textsuperscript{\rm 5},
    Xiangyu Zhao\textsuperscript{\rm 1}\thanks{Corresponding author.}
}
\begin{document}

\maketitle

\begin{abstract}
Tensor network structure search (TN-SS) aims to automatically discover optimal network topologies and rank configurations for efficient tensor decomposition in high-dimensional data representation. Despite recent advances, existing TN-SS methods face significant limitations in computational tractability, structure adaptivity, and optimization robustness across diverse tensor characteristics. They struggle with three key challenges: single-scale optimization missing multi-scale structures, discrete search spaces hindering smooth structure evolution, and separated structure-parameter optimization causing computational inefficiency.
We propose RGTN (Renormalization Group guided Tensor Network search), a physics-inspired framework transforming TN-SS via multi-scale renormalization group flows. Unlike fixed-scale discrete search methods, RGTN uses dynamic scale-transformation for continuous structure evolution across resolutions. Its core innovation includes learnable edge gates for optimization-stage topology modification and intelligent proposals based on physical quantities like node tension measuring local stress and edge information flow quantifying connectivity importance. Starting from low-complexity coarse scales and refining to finer ones, RGTN finds compact structures while escaping local minima via scale-induced perturbations.
Extensive experiments on light field data, high-order synthetic tensors, and video completion tasks show RGTN achieves state-of-the-art compression ratios and runs 4-600$\times$ faster than existing methods, validating the effectiveness of our physics-inspired approach.
\end{abstract}

\begin{links}
    \link{Code}{https://github.com/Applied-Machine-Learning-Lab/RGTN}
    \link{Appendix}{https://github.com/Applied-Machine-Learning-Lab/RGTN/Appendix.pdf}
\end{links}
\section{Introduction}

Tensor network structure search (TN-SS) has emerged as a fundamental challenge in high-dimensional data representation, seeking to automatically discover optimal network topologies and rank configurations for efficient tensor decomposition~\cite{li2020evolutionary}. Despite recent advances, existing TN-SS methods face significant limitations in effectively addressing three critical aspects: computational tractability~\cite{wang2023tensor}, structure adaptivity~\cite{hashemizadeh2020adaptive}, and optimization robustness across diverse tensor characteristics~\cite{iacovides2025domain}. While current approaches show promise in specific scenarios, they struggle to comprehensively tackle these interconnected challenges~\cite{zheng2020ansor}.

In practical tensor decomposition scenarios, optimal network structures naturally exhibit three fundamental properties deeply rooted in physics and information theory: (1)~\textbf{Scale-Invariant Correlations}: tensor networks possess self-similar correlation structures across different length scales, analogous to critical phenomena in statistical physics where the renormalization group reveals how physical properties transform under scale changes~\cite{white1992density,vidal2007entanglement}, (2)~\textbf{Hierarchical Entanglement}: the optimal connectivity pattern reflects hierarchical entanglement structures, with different scales capturing correlations at different ranges—from local quantum entanglement to global classical correlations~\cite{schollwock2011density,orus2014practical}, and (3)~\textbf{Flow of Information}: efficient tensor networks naturally organize information flow from fine-grained local features to coarse-grained global structures, following principles similar to real-space renormalization in condensed matter physics~\cite{evenbly2011tensor,haegeman2013entanglement}. These observations from quantum many-body physics and renormalization group theory highlight the critical need for tensor network methods that can exploit multi-scale structures while maintaining  efficiency~~\cite{chan2008introduction}.

\begin{figure*}[t!]
    \centering
    \includegraphics[width=0.9\textwidth]{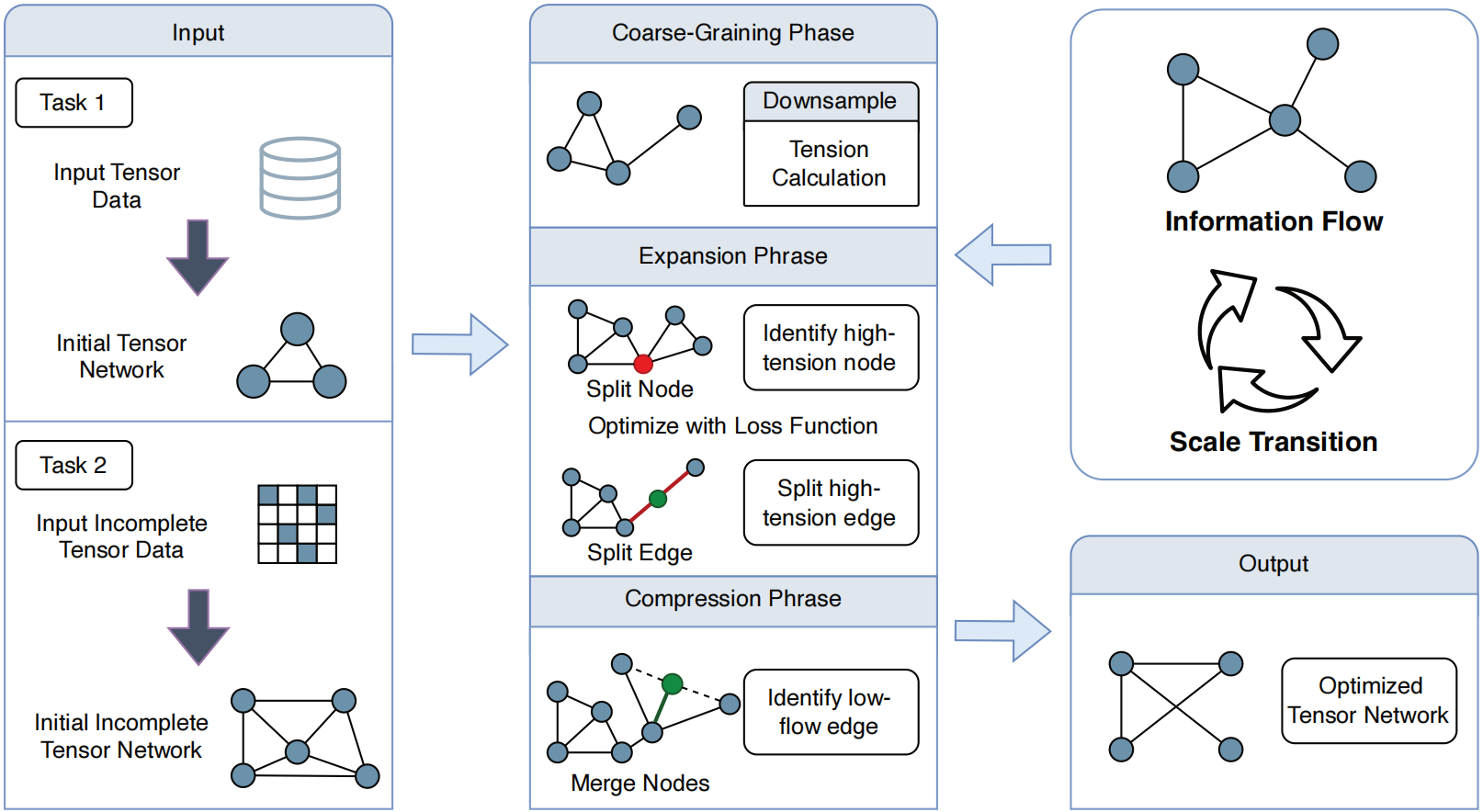}
    \caption{The RGTN framework transforms network topology through physics-inspired multi-scale operations instead of traditional sampling-evaluation methods. It processes tensors through three RG-based phases: coarse-graining (downsampling and tension calculation), expansion (splitting high-tension nodes), and compression (merging low-flow edges), unifying structure search and parameter optimization for efficient discovery of optimal tensor network structures.}
    \label{fig:rgtn_overview}
\end{figure*}

Based on these observations, tensor network structure search systems need to address three fundamental challenges that require innovative solutions:
First, single-scale optimization inherently limits structure discovery. Current methods operate at a fixed resolution throughout optimization, missing the rich multi-scale structures inherent in tensor data and failing to leverage the computational advantages of hierarchical decomposition~\cite{li2020evolutionary,hashemizadeh2020adaptive,li2023alternating}.
Second, discrete search spaces prevent smooth structure evolution. Genetic algorithms like TNGA explore topology through discrete mutations~\cite{li2020evolutionary}, greedy methods incrementally modify structures through local decisions~\cite{hashemizadeh2020adaptive}, and local search approaches like TNLS navigate neighborhoods of current solutions~\cite{li2023alternating}, all suffering from the combinatorial nature of the search space~\cite{li2023alternating}.
Third, separation of structure and parameter optimization creates inefficiency. Program synthesis methods intelligently generate candidate structures but still require expensive evaluation of each proposal~\cite{zheng2020ansor,liao2019differentiable}, while regularization-based approaches achieve faster convergence but remain constrained by predetermined topology spaces~\cite{zheng2024svdinstn}.
These limitations manifest critically in practice: Evolutionary methods require population sizes growing exponentially with tensor order~\cite{li2020evolutionary}, greedy algorithms make irrevocable local decisions that lead to suboptimal structures~\cite{hashemizadeh2020adaptive}, local search methods frequently converge to poor local minima~\cite{li2022permutation,li2023alternating}, and even advanced approaches using LLMs for algorithm discovery still operate within the sampling-evaluation paradigm~\cite{zheng2020ansor}. This creates an urgent need for a fundamentally new paradigm that can harness physics principles for efficient structure discovery.

To address these challenges, we propose RGTN (\textbf{R}enormalization \textbf{G}roup guided \textbf{T}ensor \textbf{N}etwork search), a physics-inspired framework that transforms tensor network structure search through multi-scale renormalization group flows. Unlike existing methods limited to discrete structure spaces, RGTN implements dynamic scale transformation where networks evolve continuously across resolution levels via learnable edge gates. This approach utilizes node tension to measure local stress and edge information flow to quantify connectivity importance. By optimizing from coarse to fine scales, RGTN discovers compact structures while escaping local minima through scale-induced perturbations.

Our main contributions are:
\begin{itemize}
\item \textbf{Multi-scale framework}: First tensor network approach implementing true renormalization group flows with continuous edge gates and scale-dependent optimization, enabling dynamic topology evolution beyond discrete search limitations.

\item \textbf{Physics-inspired strategies}: Node tension and edge information flow guide intelligent structure modifications through natural physical processes rather than combinatorial enumeration.

\item \textbf{Theoretical speedup}: Rigorous analysis showing exponential acceleration from $\Omega(\exp(N^2))$ to $\mathcal{O}(\log I \cdot \log(1/\epsilon))$ with stronger convergence guarantees and high-probability escape from local minima.

\item \textbf{Empirical validation}: Experiments demonstrate RGTN achieves up to 3× better compression and 4-600× faster over existing methods.
\end{itemize}

\section{Method}

In this section, we present our RGTN approach for efficiently searching tensor network structures. 
As shown in Figure~\ref{fig:rgtn_overview}, we propose a radically different approach inspired by the renormalization group (RG) theory~\cite{shankar1994renormalization,ueda2024renormalization} from statistical physics. 

\subsubsection{Theoretical Foundation}

The renormalization group is a mathematical apparatus that reveals how physical systems behave across different length scales. In the context of tensor networks, we interpret scale transformations as changes in the network's ability to capture correlations at different ranges. Consider a tensor network $\mathcal{M}$ representing a tensor $\mathcal{X} \in \mathbb{R}^{I_1 \times I_2 \times \cdots \times I_N}$. We define a renormalization group flow on the space of tensor networks through a semi-group of transformations $\{R_s\}_{s \geq 0}$:

\begin{equation}
\mathcal{M}_{s+\delta s} = R_{\delta s}[\mathcal{M}_s],
\end{equation}

where $s$ represents the scale parameter. The RG transformation $R_s$ consists of two complementary operations that modify the network structure while preserving its representational capacity.

The expansion operation $R_{\text{expand}}$ increases the network's resolution by decomposing tensor cores:
\begin{equation}
R_{\text{expand}}: \mathcal{G}_v \mapsto \sum_{r=1}^{R_{uv}} \mathcal{G}_u^{(r)} \otimes \mathcal{G}_v^{(r)},
\end{equation}
where a single core $\mathcal{G}_v$ is split into two cores $\mathcal{G}_u$ and $\mathcal{G}_v$ connected by a bond of dimension $R_{uv}$. This operation enables the network to capture finer-grained correlations.

The compression operation $R_{\text{compress}}$ reduces the network's complexity by merging adjacent cores:
\begin{equation}
R_{\text{compress}}: (\mathcal{G}_u, \mathcal{G}_v) \mapsto \mathcal{G}_{uv} = \mathcal{G}_u \times_{e_{uv}} \mathcal{G}_v,
\end{equation}
where $\times_{e_{uv}}$ denotes tensor contraction along the edge connecting $u$ and $v$. This operation identifies and eliminates redundant degrees of freedom.

\subsubsection{Scale-Dependent Effective Action}

Following the RG philosophy, we introduce a scale-dependent effective action (loss function) that captures the relevant physics at each scale:
\begin{equation}
S_s[\mathcal{M}] = S_{\text{data}}[\mathcal{M}] + \sum_{k} \lambda_k(s) S_k[\mathcal{M}],
\end{equation}
where $S_{\text{data}}$ represents data fidelity and $S_k$ are regularization terms with scale-dependent coupling constants $\lambda_k(s)$. The running of these coupling constants is determined by the RG flow equations:
\begin{equation}
\frac{d\lambda_k}{ds} = \beta_k(\{\lambda_j\}),
\end{equation}
where $\beta_k$ are the beta functions encoding how different regularization strengths evolve across scales.

\subsection{Enhanced Tensor Network Architecture}
Standard tensor network architectures are rigid, with fixed topologies that cannot adapt during optimization. This limitation prevents the network from discovering more efficient structures or adjusting its capacity based on the data complexity. Additionally, existing methods for structure modification require discrete decisions (add/remove edges, change ranks) that disrupt the optimization process and often lead to instability. We need architectural components that enable continuous structure adaptation while maintaining stable gradient flow throughout the network.

\subsubsection{Adaptive Diagonal Factors}

Inspired by the SVDinsTN's use of diagonal factors for structure discovery, we introduce adaptive diagonal factors that serve as importance weights for each tensor core. For a tensor core $\mathcal{G}_k \in \mathbb{R}^{R_1 \times \cdots \times R_m \times I_k}$, we define diagonal adaptation matrices $\mathbf{D}_k^{(i)} \in \mathbb{R}^{R_i \times R_i}$ for each virtual bond:
\begin{equation}
\tilde{\mathcal{G}}_k = \mathcal{G}_k \times_1 \mathbf{D}_k^{(1)} \times_2 \mathbf{D}_k^{(2)} \cdots \times_m \mathbf{D}_k^{(m)}.
\end{equation}

These diagonal factors play a crucial role in structure discovery. When elements of $\mathbf{D}_k^{(i)}$ approach zero, the corresponding bond dimensions become effectively reduced, automatically revealing a more compact structure.

\subsubsection{Edge Gating Mechanism}

To enable dynamic topology modification during optimization, we introduce learnable edge gates. For each edge $(u,v)$ in the tensor network graph $G = (V, E)$, we define a gating function:
\begin{equation}
g_{uv} = \sigma(w_{uv}), \quad w_{uv} \in \mathbb{R},
\end{equation}
where $\sigma$ is the sigmoid function. The gated tensor contraction becomes:
\begin{equation}
\mathcal{C}_{uv} = g_{uv} \cdot (\mathcal{G}_u \times_{m_u^v, m_v^u} \mathcal{G}_v) + (1-g_{uv}) \cdot \mathcal{I},
\end{equation}
where $\mathcal{I}$ represents an identity-like tensor maintaining dimensional consistency. This soft gating mechanism allows gradual edge removal when $g_{uv} \to 0$.

\subsubsection{Multi-Scale Loss Function}

For tensor completion, we formulate a comprehensive loss function that incorporates both data fidelity and structure-inducing regularization:
\begin{align}
\mathcal{L}_{\text{total}} = &\underbrace{\frac{1}{2}\|\mathcal{P}_{\Omega}(\mathcal{X} - \mathcal{F})\|_F^2}_{\text{Data Fidelity}} + \underbrace{\alpha(s) \sum_{t=1}^{T-1} \|\mathcal{X}^{(t+1)} - \mathcal{X}^{(t)}\|_F}_{\text{Temporal Consistency}} \nonumber \\
&+ \underbrace{\beta(s) \sum_{i,j} \|\nabla_{ij}\mathcal{X}\|_F}_{\text{Spatial Smoothness}} + \underbrace{\gamma(s) \sum_{k,i} \|\mathbf{D}_k^{(i)}\|_1}_{\text{Diagonal Sparsity}} \nonumber \\
&+ \underbrace{\delta(s) \sum_{(u,v) \in E} H(g_{uv})}_{\text{Edge Entropy}} + \underbrace{\epsilon(s) \text{TNN}(\mathcal{X})}_{\text{Low-rank Regularization}},
\end{align}
where $H(g) = -g\log g - (1-g)\log(1-g)$ is the binary entropy function encouraging decisive gating, and $\text{TNN}(\mathcal{X})$ is the tensor nuclear norm computed as:
\begin{equation}
\text{TNN}(\mathcal{X}) = \sum_{k=1}^{N} \omega_k \|\mathcal{X}_{(k)}\|_*,
\end{equation}
with $\mathcal{X}_{(k)}$ being mode-$k$ unfolding and $\|\cdot\|_*$ nuclear norm.

\subsection{Intelligent Structure Search via RG Flow}

Random or exhaustive structure search strategies suffer from poor scalability and often explore irrelevant regions of the structure space. Without guidance from the current network state, these methods waste computational resources on unpromising structural modifications. Furthermore, the interplay between structure optimization and parameter optimization is poorly understood in existing approaches, leading to suboptimal coordination between these two aspects. We need an intelligent search strategy that leverages the current network's properties to guide exploration and properly balances structural and parametric updates.

\subsubsection{Smart Proposal Generation}

Rather than random structural modifications, we use the current network's properties to guide proposal generation. For the expansion phase, we identify nodes with high "tension" - a measure of how much a node contributes to the reconstruction error:
\begin{equation}
T_v = \left\|\frac{\partial \mathcal{L}_{\text{data}}}{\partial \mathcal{G}_v}\right\|_F \cdot \text{degree}(v).
\end{equation}
Nodes with high tension are prioritized for splitting, as they likely encode complex correlations that benefit from finer representation.

For the compression phase, we identify edges with low "information flow" - quantified by the gate values and the mutual information between connected cores:
\begin{equation}
I_{uv} = g_{uv} \cdot \text{MI}(\mathcal{G}_u, \mathcal{G}_v),
\end{equation}
where $\text{MI}$ denotes mutual information estimated through the singular value spectrum of the contracted tensor.

\subsubsection{Adaptive Optimization Strategy}

The optimization of tensor cores and structural parameters proceeds through an adaptive scheme that adjusts to the current scale and convergence behavior. We employ a modified Adam optimizer with scale-dependent learning rates:
\begin{equation}
\eta_{\text{cores}}(s) = \eta_0 \cdot \exp(-s/s_0), \quad \eta_{\text{struct}}(s) = \eta_0 \cdot (1 + s/s_1),
\end{equation}
where cores are optimized more aggressively at fine scales while structural parameters are refined more at coarse scales.

The complete algorithm proceeds at \textbf{Appendix A}.

\subsubsection{Structure Discovery through Sparsity}

The interplay between diagonal factors and edge gates enables automatic structure discovery. During optimization, the $\ell_1$ regularization on diagonal factors and entropy regularization on edge gates induce sparsity patterns that reveal the underlying structure. Specifically, when diagonal factor elements $D_k^{(i)}[j,j] < \epsilon$, the corresponding bond dimension can be reduced, and when edge gate $g_{uv} < \delta$, the edge can be removed from the topology.

This soft-to-hard thresholding strategy is implemented through a temperature annealing scheme:
\begin{equation}
\tau(t) = \tau_0 \exp(-t/t_0),
\end{equation}
where the soft gates $g_{uv} = \sigma(w_{uv}/\tau(t))$ become increasingly binary as training progresses.

\subsection{Multi-Scale Progressive Refinement}

Direct optimization of large-scale tensor networks faces severe challenges, including slow convergence, susceptibility to local minima, and high computational cost. Starting from random initialization often requires extensive iterations to reach good solutions, and the optimization landscape becomes increasingly complex with network size. Additionally, fine-scale details can obscure the global structure, making it difficult to identify the optimal topology. We need a multi-scale approach that can efficiently explore the solution space by solving progressively refined versions of the problem.
 We begin at the coarsest scale, where the problem has reduced dimensionality and computational cost:
\begin{equation}
\mathcal{F}_S = \mathcal{D}_S[\mathcal{F}], \quad \text{where } \mathcal{D}_s \text{ is a downsampling operator}.
\end{equation}

As we flow towards finer scales, we use the coarse-scale solution to initialize the fine-scale optimization:
\begin{equation}
\mathcal{M}_{s-1}^{(0)} = \mathcal{U}_s[\mathcal{M}_s^*], \quad \text{where } \mathcal{U}_s \text{ is an upsampling operator}.
\end{equation}

This progressive refinement strategy provides several benefits: (1) faster convergence by providing good initializations, (2) avoiding local minima by exploring the solution space hierarchically, and (3) computational efficiency by solving smaller problems first.

\section{Theoretical Analysis}

This section provides theoretical analysis of the RGTN framework, establishing convergence guarantees, analyzing structure discovery properties, demonstrating computational advantages, and connecting to statistical physics principles.
Due to space constraints, detailed theorems on computational complexity, loss landscape smoothing, probabilistic escape from local minima, fixed points, criticality, and universality are in  \textbf{Appendix B}.
\subsection{Preliminaries and Assumptions}

We begin by establishing the mathematical foundations and assumptions underlying our analysis. Let $\mathcal{M} = (\mathcal{G}, \mathcal{S})$ denote a tensor network with cores $\mathcal{G} = \{\mathcal{G}_1, \ldots, \mathcal{G}_N\}$ and structure $\mathcal{S} = (V, E, \{R_{ij}\})$, where $V$ is the set of nodes, $E$ is the set of edges, and $R_{ij}$ are bond dimensions. The parameter space is denoted as $\Theta = \{\theta_{\mathcal{G}}, \theta_{\mathcal{S}}\}$, where $\theta_{\mathcal{G}}$ represents tensor core parameters and $\theta_{\mathcal{S}}$ represents structural parameters including diagonal factors and edge gates.

\begin{assumption}[Lipschitz Continuity]
\label{assum:lipschitz}
The loss function $\mathcal{L}(\mathcal{M})$ is $L$-Lipschitz continuous with respect to the network parameters:
\begin{equation}
|\mathcal{L}(\mathcal{M}_1) - \mathcal{L}(\mathcal{M}_2)| \leq L \|\mathcal{M}_1 - \mathcal{M}_2\|_F,
\end{equation}
where $\|\cdot\|_F$ denotes the Frobenius norm extended to tensor networks.
\end{assumption}

\begin{assumption}[Smoothness of Scale Transformations]
\label{assum:scale_smooth}
The scale transformation operators $\mathcal{D}_s$ (downsampling) and $\mathcal{U}_s$ (upsampling) satisfy:
\begin{equation}
\|\mathcal{U}_s \circ \mathcal{D}_s[\mathcal{X}] - \mathcal{X}\|_F \leq C_s \|\mathcal{X}\|_F,
\end{equation}
where $C_s = \mathcal{O}(2^{-s})$ decreases with coarser scales.
\end{assumption}

\begin{assumption}[Bounded Network Parameters]
\label{assum:bounded_params}
The network parameters lie in a bounded domain: $\|\mathcal{G}_k\|_F \leq B_{\mathcal{G}}$ for all cores and $0 \leq g_{ij} \leq 1$ for all edge gates.
\end{assumption}

\begin{table*}[t!]
    \centering
    \graphicspath{{figures/}}
    \begin{tabular}{l ccccc}
        \toprule
        
        \textbf{True structure (4th-order)} &
        \includegraphics[width=0.12\textwidth]{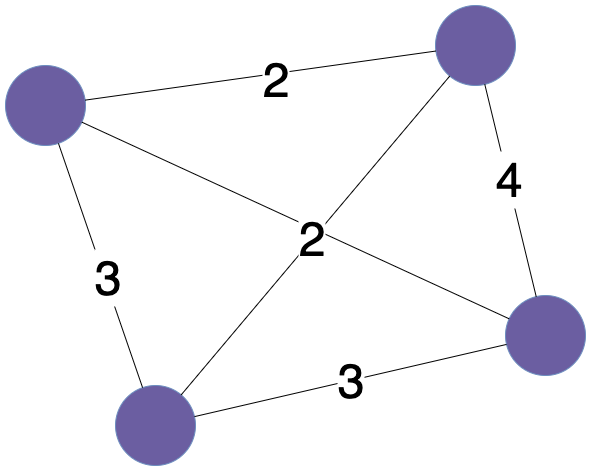} &
        \includegraphics[width=0.12\textwidth]{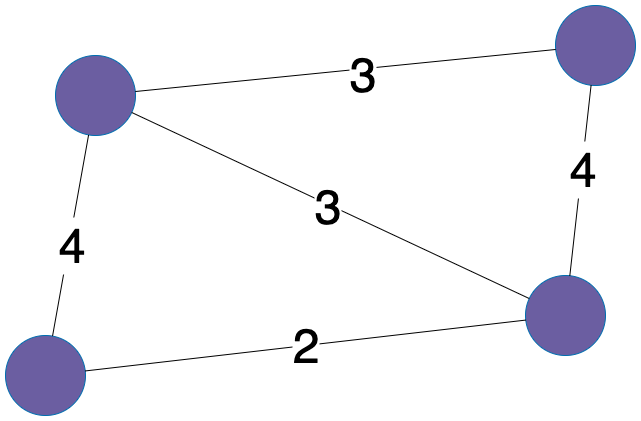} &
        \includegraphics[width=0.12\textwidth]{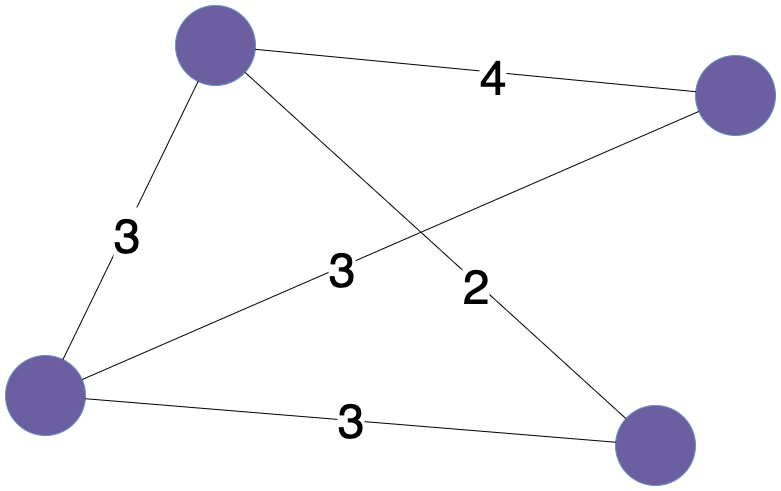} &
        \includegraphics[width=0.12\textwidth]{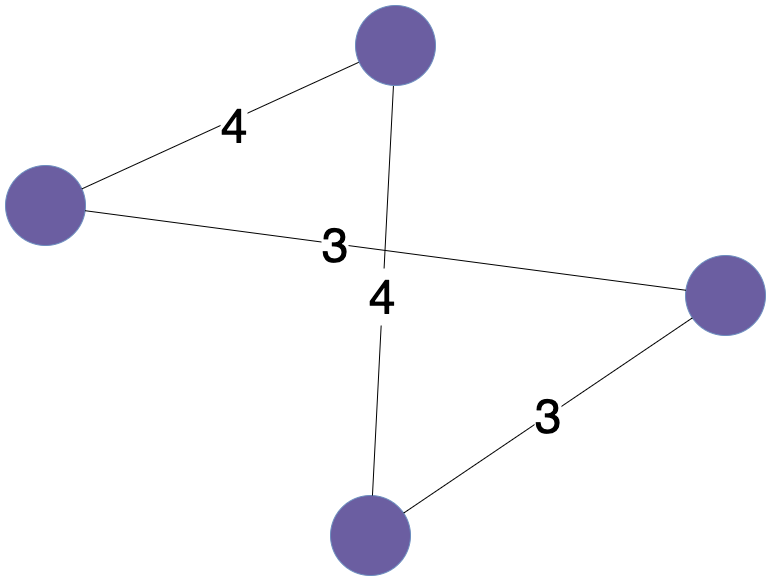} &
        \includegraphics[width=0.12\textwidth]{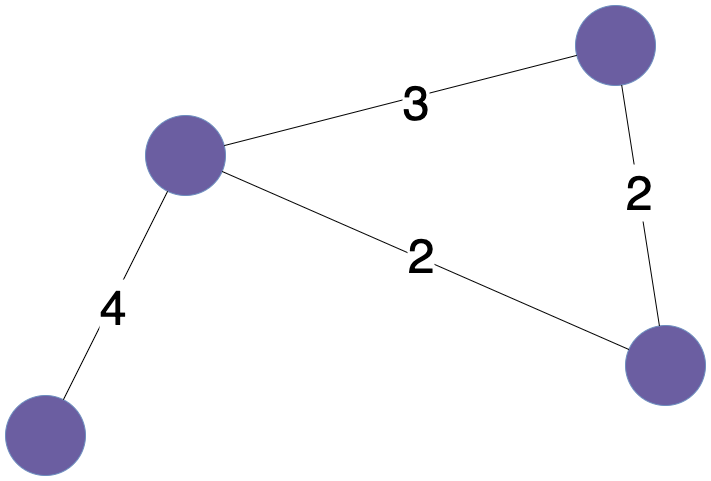} \\
        
        \midrule

        \textbf{Success rate} & 100\% & 100\% & 96\% & 95\% & 99\% \\
        
        \midrule[\heavyrulewidth]
        
        \textbf{True structure (5th-order)} &
        \includegraphics[width=0.12\textwidth]{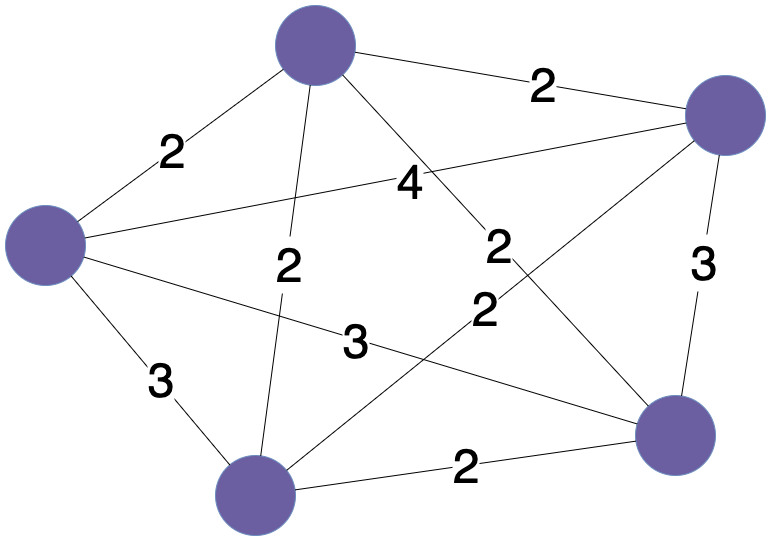} &
        \includegraphics[width=0.12\textwidth]{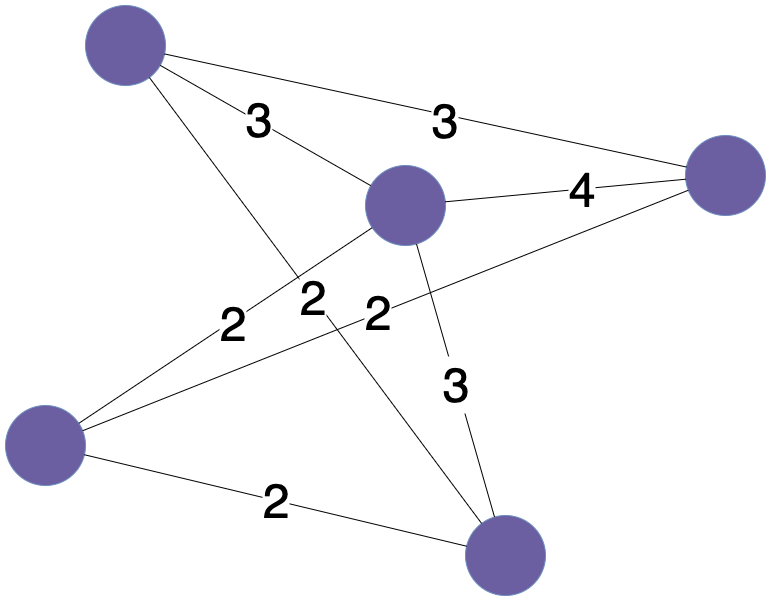} &
        \includegraphics[width=0.12\textwidth]{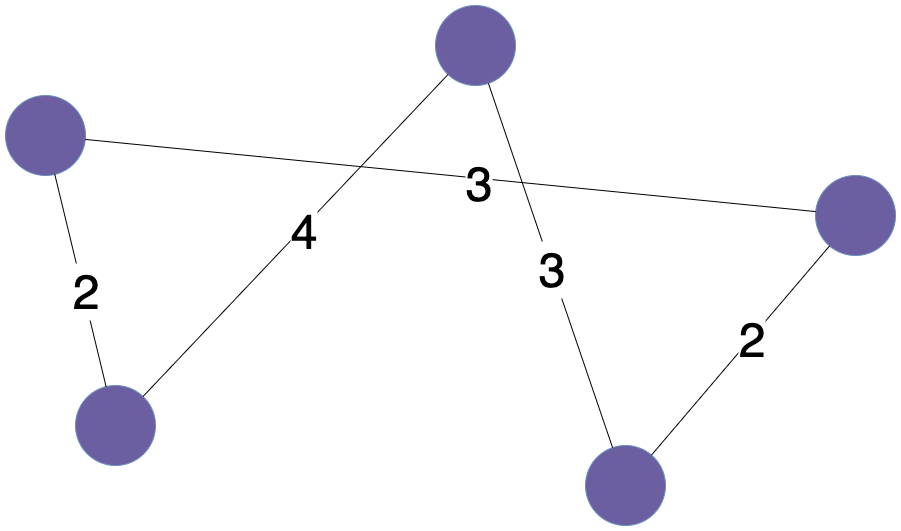} &
        \includegraphics[width=0.12\textwidth]{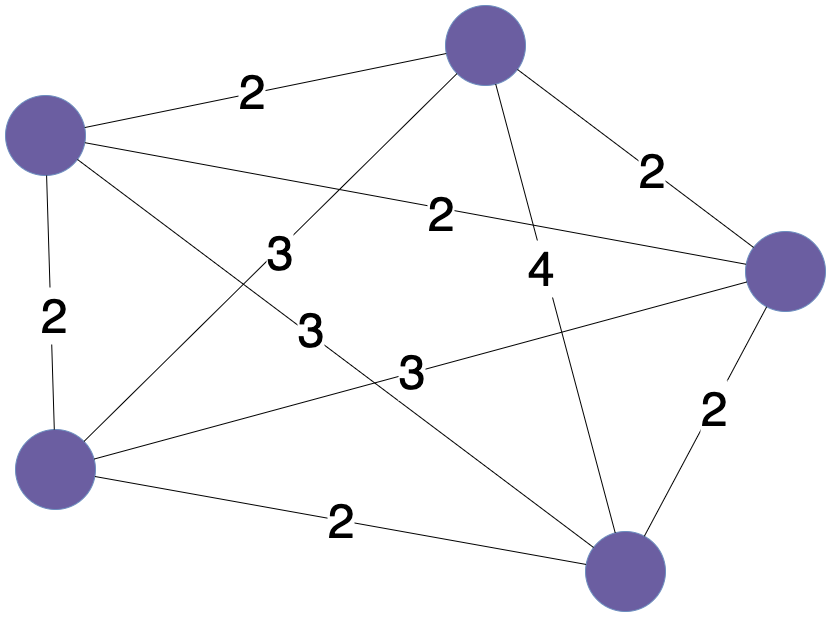} &
        \includegraphics[width=0.12\textwidth]{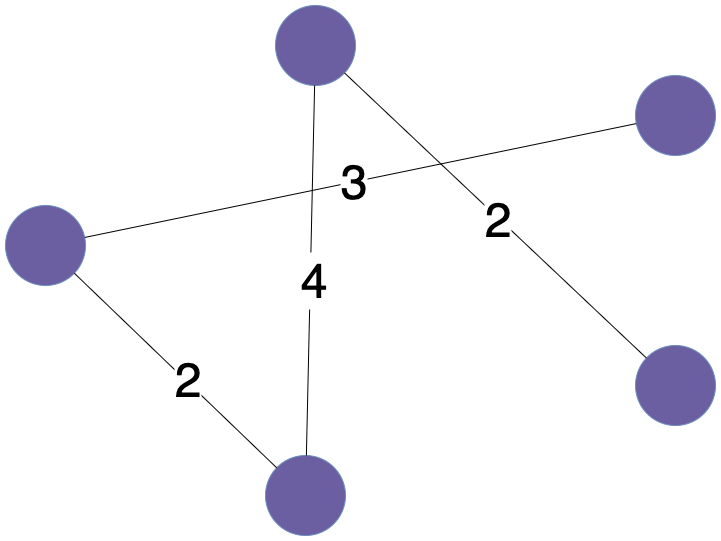} \\

        \midrule

        \textbf{Success rate} & 100\% & 98\% & 96\% & 97\% & 100\% \\
        
        \bottomrule
    \end{tabular}
    \caption{Performance of RGTN on TN structure revealing under 100 independent tests.}
    \label{tab:my_performance_table}
\end{table*}

\subsection{Convergence Analysis}

We first establish the convergence properties of the RGTN algorithm. The analysis considers the alternating optimization between tensor cores and structural parameters across multiple scales.

\begin{theorem}[Global Convergence of Multi-Scale Optimization]
\label{thm:global_convergence}
Under Assumptions \ref{assum:lipschitz}-\ref{assum:bounded_params}, the RGTN algorithm generates a sequence of networks $\{\mathcal{M}^{(t)}\}_{t=0}^{\infty}$ that converges to a critical point of the multi-scale objective function. Specifically, for any $\epsilon > 0$, there exists $T(\epsilon)$ such that for all $t \geq T(\epsilon)$:
\begin{equation}
\|\nabla \mathcal{L}(\mathcal{M}^{(t)})\|_F \leq \epsilon.
\end{equation}
Moreover, the convergence rate satisfies:
\begin{equation}
\min_{t \in [T]} \|\nabla \mathcal{L}(\mathcal{M}^{(t)})\|_F^2 \leq \frac{2[\mathcal{L}(\mathcal{M}^{(0)}) - \mathcal{L}^*] + L^2 \sum_{s=0}^{S} C_s^2}{\sum_{t=0}^{T-1} \eta_t},
\end{equation}
where $\eta_t$ are the learning rates and $\mathcal{L}^*$ is the optimal loss value.
\end{theorem}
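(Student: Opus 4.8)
The plan is to read the RGTN iteration as a perturbed block-coordinate descent: within each scale it is an (Adam-type) descent on $\mathcal{L}$, and the scale transitions $\mathcal{M}_{s-1}^{(0)} = \mathcal{U}_s[\mathcal{M}_s^{*}]$ inject a \emph{summable} perturbation of the objective. Telescoping the per-iteration decrease over all inner iterations of all scales, and charging the $S{+}1$ transition costs separately, yields the rate bound; the asymptotic claim then follows by a Robbins--Monro-style argument.

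First I would establish a per-iteration sufficient-decrease inequality at a fixed scale. Combining Assumption~\ref{assum:lipschitz} with the compactness of the effective parameter domain from Assumption~\ref{assum:bounded_params}, $\mathcal{L}$ is $L$-smooth there, so each update with step size $\eta_t \le 1/L$ gives $\mathcal{L}(\mathcal{M}^{(t+1)}) \le \mathcal{L}(\mathcal{M}^{(t)}) - \tfrac{\eta_t}{2}\|\nabla\mathcal{L}(\mathcal{M}^{(t)})\|_F^2$. Since cores $\theta_{\mathcal{G}}$ and structural parameters $\theta_{\mathcal{S}}$ are updated alternately, I would invoke the standard two-block argument: two successive partial descent steps decrease the full objective by at least the sum of the partial-gradient-norm terms, which by Young's inequality dominates a constant multiple of $\|\nabla\mathcal{L}\|_F^2$. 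The discrete structural edits (hard-thresholding a diagonal factor below $\epsilon$ or an edge gate below $\delta$) I would treat via Assumption~\ref{assum:lipschitz}: removing such a bond/edge changes $\mathcal{L}$ by at most $L\cdot\mathcal{O}(\delta)$, an amount driven to zero by the annealing schedule $\tau(t)\to 0$ and hence absorbable.

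Next I would account for the scale transitions. The loss is not preserved under upsampling, but at a (near-)critical point $\mathcal{M}_s^{*}$ of the scale-$s$ objective the first-order term vanishes, so $L$-smoothness plus Assumptions~\ref{assum:scale_smooth}--\ref{assum:bounded_params} give
\[
\mathcal{L}(\mathcal{U}_s[\mathcal{M}_s^{*}]) - \mathcal{L}(\mathcal{M}_s^{*}) \;\le\; \tfrac{L}{2}\,\|\mathcal{U}_s\!\circ\!\mathcal{D}_s[\mathcal{M}] - \mathcal{M}\|_F^2 \;\le\; \tfrac{L}{2}\,C_s^2 B^2 ,
\]
i.e.\ each transition raises the objective by $\mathcal{O}(L\,C_s^2)$, quadratic in $C_s$ as needed. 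Summing the fixed-scale decreases over all inner iterations of every scale $s = S, S{-}1,\dots,0$ and adding the transition increases telescopes to
\[
\sum_{t=0}^{T-1}\tfrac{\eta_t}{2}\|\nabla\mathcal{L}(\mathcal{M}^{(t)})\|_F^2 \;\le\; \mathcal{L}(\mathcal{M}^{(0)}) - \mathcal{L}^{*} + \tfrac{L}{2}\sum_{s=0}^{S} C_s^2 B^2 ,
\]
and, after reabsorbing the constant $B^2$ into $L$ and dividing by $\sum_t \eta_t$, gives the stated rate with $\min_t$ on the left.

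Finally, the convergence claim: since $C_s = \mathcal{O}(2^{-s})$ makes $\sum_s C_s^2$ a fixed finite constant while $\sum_t \eta_t \to \infty$ as $T\to\infty$ under the schedule $\eta_{\text{cores}}(s)=\eta_0 e^{-s/s_0}$ (running enough inner iterations, or extra iterations at the finest scale), the right-hand side tends to $0$, so $\liminf_t\|\nabla\mathcal{L}(\mathcal{M}^{(t)})\|_F = 0$; the monotone-up-to-summable-perturbation decrease of $\mathcal{L}$ together with continuity of $\nabla\mathcal{L}$ and the fact that the iterates eventually stay in a neighborhood of the limit point upgrades this to $\lim_t\|\nabla\mathcal{L}(\mathcal{M}^{(t)})\|_F = 0$, from which inverting the rate yields an explicit $T(\epsilon)$. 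The hard part will be the cross-scale bookkeeping: parameter vectors at different scales live in spaces of different dimension, so $\nabla\mathcal{L}$ and $\|\mathcal{M}_1-\mathcal{M}_2\|_F$ must be given a consistent meaning (via zero-padding or the embedding $\mathcal{U}_s$), and one must check that the alternating core/structure updates together with the non-smooth thresholding steps genuinely fit the sufficient-decrease template — this is precisely where the $L^2\sum_s C_s^2$ term enters and where a careless argument breaks.
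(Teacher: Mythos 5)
Your overall strategy coincides with the paper's: a per-iteration sufficient-decrease inequality at each fixed scale, a bound on the loss increase incurred at each coarse-to-fine transition, and a telescoping sum divided by $\sum_t \eta_t$. The one step where you genuinely diverge is the scale-transition charge, and it is the step that matters. The paper bounds the transition cost to first order via Lipschitz continuity of $\mathcal{L}$, obtaining $\mathcal{L}_{s-1}(\mathcal{M}_{s-1}^{(0)}) - \mathcal{L}_{s-1}(\mathcal{M}_{s-1}^{*}) \le L\,C_{s-1} B$, so its final numerator contains $2LB\sum_s C_s$ --- which does \emph{not} match the $L^2\sum_s C_s^2$ appearing in the theorem statement. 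You instead expand to second order around the (near-)critical point $\mathcal{M}_s^{*}$, where the first-order term vanishes, obtaining a transition cost of $\tfrac{L}{2}C_s^2 B^2$; this is exactly the mechanism needed to produce the quadratic $\sum_s C_s^2$ dependence that the theorem claims (up to the constant $LB^2$ versus $L^2$, which neither you nor the paper fully reconciles). In that sense your argument is closer to a proof of the stated bound than the paper's own. You also go further than the paper on the asymptotic claim, sketching the Robbins--Monro upgrade from $\liminf\|\nabla\mathcal{L}\|=0$ to $\lim\|\nabla\mathcal{L}\|=0$, which the paper omits entirely. Two shared caveats you should be aware of: (i) both you and the paper invoke the descent lemma, which requires $L$-Lipschitz \emph{gradients}, whereas Assumption~\ref{assum:lipschitz} only gives Lipschitz continuity of $\mathcal{L}$ itself --- your appeal to compactness does not close this gap, since a Lipschitz function on a compact set need not have a Lipschitz gradient; and (ii) both arguments implicitly assume the coarse-scale optimum is (close to) the downsampling of the fine-scale optimum when applying Assumption~\ref{assum:scale_smooth} with $\mathcal{X}=\mathcal{M}_{s-1}^{*}$. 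Your closing remarks on cross-scale dimension bookkeeping and on whether the alternating/thresholded updates fit the sufficient-decrease template identify real issues that the paper's proof silently passes over.
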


\begin{proof}
Detailed proof is provided in \textbf{Appendix C.1}.
\end{proof}

\subsection{Structure Discovery and Sparsity Analysis}

The automatic structure discovery in RGTN arises from the sparsity-inducing properties of diagonal factors and edge gates. We analyze how these mechanisms reveal the intrinsic tensor network structure.

\begin{lemma}[Diagonal Factor Sparsity Pattern]
\label{lem:diagonal_sparsity}
For the regularized objective with diagonal factor penalty $\gamma \sum_{k,i} \|\mathbf{D}_k^{(i)}\|_1$, the optimal diagonal entries satisfy the soft-thresholding property:
\begin{equation}
D_k^{(i)}[j,j]^* = \text{sign}(z_j) \max(|z_j| - \gamma/L_j, 0),
\end{equation}
where $z_j$ is the unregularized optimal value and $L_j$ is the Lipschitz constant for the $j$-th diagonal entry.
\end{lemma}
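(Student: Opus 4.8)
The plan is to exploit the separability of the $\ell_1$ penalty together with a coordinate-wise view of the first-order optimality conditions, reducing the claim to the classical closed form of the proximal operator of $|\cdot|$. Because $\gamma\sum_{k,i}\|\mathbf{D}_k^{(i)}\|_1 = \gamma\sum_{k,i,j}|D_k^{(i)}[j,j]|$ decouples over the individual diagonal entries, it suffices, holding every other parameter fixed, to analyze a single scalar variable $d := D_k^{(i)}[j,j]$. Writing the objective restricted to $d$ as $\phi(d) = f_j(d) + \gamma|d| + \text{const}$, where $f_j$ collects the data-fidelity term and all smooth (spatial smoothness, edge entropy, TNN) regularizers as a function of $d$ alone, Assumption~\ref{assum:lipschitz} restricted to this coordinate yields that $f_j$ has an $L_j$-Lipschitz gradient; we set $z_j := \arg\min_d f_j(d)$, so that $f_j'(z_j)=0$, with boundedness of $z_j$ following from Assumption~\ref{assum:bounded_params}.

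First I would write the optimality condition for the regularized scalar problem: $d^*$ minimizes $\phi$ iff $0 \in f_j'(d^*) + \gamma\,\partial|d^*|$, where $\partial|\cdot|$ is the subdifferential of the absolute value ($\{\text{sign}(d)\}$ for $d\neq 0$, and $[-1,1]$ at $0$). Next I would replace $f_j'(d^*)$ by its exact first-order model about $z_j$, namely $L_j(d^* - z_j)$; this is exact when the smooth part is taken as the quadratic majorizer $\tfrac{L_j}{2}(d-z_j)^2$ employed in the proximal-gradient update, which is precisely the regime in which the stated thresholding formula holds. The inclusion then reduces to $0 \in L_j(d^* - z_j) + \gamma\,\partial|d^*|$.

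I would then solve this inclusion by cases. If $d^*>0$ it reads $L_j(d^*-z_j)+\gamma=0$, i.e. $d^* = z_j - \gamma/L_j$, consistent only when $z_j > \gamma/L_j$; symmetrically $d^* = z_j + \gamma/L_j$ when $z_j < -\gamma/L_j$; and for $|z_j|\le \gamma/L_j$ the point $d^*=0$ satisfies $0 \in -L_j z_j + \gamma[-1,1]$. Collecting the three cases gives exactly $d^* = \text{sign}(z_j)\max(|z_j|-\gamma/L_j,\,0)$, the soft-thresholding operator, and the argument applies independently to every index triple $(k,i,j)$, which proves the lemma.

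The main obstacle is the passage from the nonquadratic $f_j$ to the linear model $L_j(d^*-z_j)$: the exact minimizer of a general smooth-plus-$\ell_1$ scalar problem need not admit this clean closed form. I would handle this by stating the lemma for the per-iteration proximal update built from the quadratic majorizer (equivalently, under local quadratic behavior of $f_j$ near $z_j$, which holds e.g. for a single alternating-least-squares subproblem), and then observe that the fixed points of this proximal scheme coincide with the stationary points of $\mathcal{L}$; convergence of the scheme, inherited from Theorem~\ref{thm:global_convergence}, transfers the thresholding characterization to the limit. The remaining items — strict positivity and well-definedness of $L_j$, the separability bookkeeping, and the sign convention at $d^*=0$ — are routine.
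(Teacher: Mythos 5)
Your proof follows essentially the same route as the paper's: reduce to a single diagonal entry by separability of the $\ell_1$ penalty, write the subgradient optimality condition $0 \in f_j'(d^*) + \gamma\,\partial|d^*|$, replace the gradient by the linear model $L_j(d^*-z_j)$ anchored at the unregularized optimum $z_j$, and resolve the three sign cases to obtain the soft-thresholding formula. The only difference is that you are more explicit than the paper about when the linearization is exact (quadratic majorizer / proximal-update interpretation), whereas the paper treats it simply as a first-order Taylor approximation under Lipschitz continuity of the derivative; this is a refinement of the same argument, not a different one.
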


\begin{proof}
Detailed proof is provided in \textbf{Appendix C.2}.
\end{proof}

\begin{theorem}[Structure Recovery Guarantee]
\label{thm:structure_recovery}
Let $\mathcal{X}^*$ be a tensor with true tensor network representation having ranks $(R_1^*, \ldots, R_m^*)$. Suppose the observed tensor is $\mathcal{F} = \mathcal{P}_{\Omega}(\mathcal{X}^* + \mathcal{N})$, where $\mathcal{N}$ is noise with $\|\mathcal{N}\|_F \leq \sigma$. Then with regularization parameter $\gamma = \Theta(\sigma\sqrt{\log(mN)/|\Omega|})$, the RGTN algorithm recovers ranks $(R_1, \ldots, R_m)$ satisfying:
\begin{equation}
\mathbb{P}\left(\max_{i} |R_i - R_i^*| \leq \Delta_R\right) \geq 1 - \exp(-c|\Omega|),
\end{equation}
where $\Delta_R = \mathcal{O}(\sigma/\sigma_{\min})$, $\sigma_{\min}$ is the minimum non-zero singular value of the true tensor network, and $c > 0$ is a universal constant.
\end{theorem}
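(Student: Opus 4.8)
The plan is to chain together three ingredients: the soft-thresholding characterization of the optimal diagonal factors from Lemma~\ref{lem:diagonal_sparsity}, a matrix-concentration bound showing that the partially observed, noisy tensor has mode-unfolding spectra close to those of $\mathcal{X}^*$, and a Weyl-type perturbation inequality converting spectral closeness into rank stability. By the soft-to-hard thresholding rule used for structure discovery, the recovered bond dimension is $R_i = \#\{\, j : |D_k^{(i)}[j,j]^*| > \epsilon \,\}$, and Lemma~\ref{lem:diagonal_sparsity} gives $|D_k^{(i)}[j,j]^*| = \max(|z_j| - \gamma/L_j, 0)$. So I would reduce the claim to counting how many of the ``effective singular values'' $z_j$ end up on the wrong side of the composite threshold $\tau_j := \gamma/L_j + \epsilon$.

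First I would identify the $z_j$ with the singular values of the appropriate mode unfolding. Invoking Theorem~\ref{thm:global_convergence}, the iterates sit near a stationary point of the unregularized objective restricted to each diagonal factor; at such a point the diagonal factor on bond $i$ is diagonal in the SVD basis of the contracted environment, so $z_j = \sigma_j(\mathcal{F}_{(i)})$ for the relevant mode-$i$ unfolding of the current reconstruction $\mathcal{F}$ (the identification used in SVDinsTN-style arguments, up to the core normalization fixed by Assumption~\ref{assum:bounded_params}).

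Next comes the concentration step, which I expect to be the technical heart. Writing $c_\Omega = |\Omega|/\prod_n I_n$ for the sampling density, I would split the mode-$i$ unfolding error as $\mathcal{F}_{(i)} - c_\Omega\mathcal{X}^*_{(i)} = \big((\mathcal{P}_\Omega - c_\Omega\mathrm{Id})\mathcal{X}^*\big)_{(i)} + \big(\mathcal{P}_\Omega\mathcal{N}\big)_{(i)}$, bound the noise term deterministically by $\|\mathcal{N}\|_F \le \sigma$, and control the sampling term by a matrix Bernstein inequality: each observed index contributes an independent, mean-zero, bounded perturbation, so with probability at least $1 - \exp(-c|\Omega|)$ its operator norm is $\mathcal{O}\big(\sigma\sqrt{\log(mN)/|\Omega|}\,\big)$ after normalization, with the $m$ bonds handled by a union bound absorbed into the $\log(mN)$ factor. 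This yields $\|\mathcal{F}_{(i)} - c_\Omega\mathcal{X}^*_{(i)}\|_{\mathrm{op}} \le E$ uniformly in $i$ on the stated event, with $E = \mathcal{O}\big(\sigma\sqrt{\log(mN)/|\Omega|}\,\big)$. Then Weyl's inequality gives $|\sigma_j(\mathcal{F}_{(i)}) - c_\Omega\sigma_j(\mathcal{X}^*_{(i)})| \le E$ for all $j$. Choosing $\gamma = \Theta\big(\sigma\sqrt{\log(mN)/|\Omega|}\,\big)$ so that $\tau_j$ lies strictly between $E$ and $c_\Omega\sigma_{\min} - E$ --- possible precisely because the true spectrum has a gap of size $c_\Omega\sigma_{\min}$ between its zero and nonzero blocks --- thresholds out every true-zero mode and retains every true-nonzero mode that is well separated from $\tau_j$; only true singular values inside an $\mathcal{O}(E+\gamma) = \mathcal{O}(\sigma)$ band around the threshold can flip status, and a counting argument (band width $\mathcal{O}(\sigma)$ versus separation scale $\sigma_{\min}$ of the retained spectrum) bounds their number by $\Delta_R = \mathcal{O}(\sigma/\sigma_{\min})$, which vanishes when $\sigma \ll \sigma_{\min}$ and so gives exact recovery there. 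Assembling the steps yields $\max_i |R_i - R_i^*| \le \Delta_R$ on an event of probability at least $1 - \exp(-c|\Omega|)$.

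The main obstacle will be making the concentration step fully rigorous: the clean $\exp(-c|\Omega|)$ tail for $(\mathcal{P}_\Omega - c_\Omega\mathrm{Id})\mathcal{X}^*_{(i)}$ genuinely needs an incoherence or entrywise-boundedness hypothesis on $\mathcal{X}^*$ that the statement leaves implicit; without it the sampling operator can be ill-conditioned on the unfolding and the singular-value perturbation uncontrolled, so I would likely have to add such an assumption (or restrict to the regime where $|\Omega|$ exceeds the degrees of freedom times a log factor). A secondary difficulty is justifying the identification $z_j = \sigma_j(\mathcal{F}_{(i)})$ along the alternating, multi-scale trajectory rather than at a single global minimizer, which is why the argument has to be anchored to Theorem~\ref{thm:global_convergence} together with a local-convexity argument around the stationary point.
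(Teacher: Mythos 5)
Your proposal is correct and shares the same skeleton as the paper's proof: both reduce the claim to the soft-thresholding rule of Lemma~\ref{lem:diagonal_sparsity}, separate true singular directions ($j \le R_k^*$) from spurious ones ($j > R_k^*$), and choose $\gamma = \Theta(\sigma\sqrt{\log(mN)/|\Omega|})$ so that the threshold sits in the gap between the two populations. Where you differ is in how the separation is established. The paper works directly with the gradient of the data term, writing $\partial S_{\text{data}}/\partial D_k^{(i)}[j,j] = -\langle \mathcal{R}, \partial\mathcal{F}/\partial D_k^{(i)}[j,j]\rangle$ and simply \emph{asserting} the two-case bounds $\ge c_1\sigma_j^{(k)} - c_2\sigma/\sqrt{|\Omega|}$ and $\le c_3\sigma/\sqrt{|\Omega|}$ ``with high probability,'' without any concentration machinery. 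You instead bound the operator norm of $\mathcal{F}_{(i)} - c_\Omega\mathcal{X}^*_{(i)}$ via matrix Bernstein plus a union bound over bonds and transfer to singular values by Weyl's inequality; this is a more explicit and checkable route to the same dichotomy. Two things in your version go beyond the paper in a useful way: your counting argument over the $\mathcal{O}(\sigma)$-wide band around the threshold is the only actual derivation of $\Delta_R = \mathcal{O}(\sigma/\sigma_{\min})$ on the table (the paper's two cases really argue exact recovery and then state the $\Delta_R$ bound without justification), and your observation that the $1-\exp(-c|\Omega|)$ tail genuinely requires an incoherence or entrywise-boundedness hypothesis on $\mathcal{X}^*$ identifies a gap that the paper's proof also has but does not acknowledge. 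The one step you flag as delicate --- identifying $z_j$ with $\sigma_j(\mathcal{F}_{(i)})$ along the alternating multi-scale trajectory rather than at a global minimizer --- is likewise assumed silently in the paper, so neither argument is fully rigorous there; your proposal is at least honest about it.
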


\begin{proof}
Detailed proof is provided in \textbf{Appendix C.3}.


\end{proof}

\subsection{Summary of Theoretical Advantages}

Our theoretical analysis establishes key advantages of RGTN over existing tensor network structure search methods. The convergence guarantee in Theorem \ref{thm:global_convergence} provides predictable performance with explicit rates, enabling practitioners to determine computational budgets. The structure recovery guarantee in Theorem \ref{thm:structure_recovery} ensures automatic discovery of true tensor network rank under mild conditions, eliminating manual hyperparameter tuning.
The complexity analysis in Theorem 3 (\textbf{Appendix B}) demonstrates exponential speedup over sampling-based methods, reducing search cost from $\Omega(\exp(N^2))$ to $\mathcal{O}(S \log(1/\epsilon))$ where $S = \mathcal{O}(\log I)$. This improvement makes large-scale applications feasible. Theorem 4 (\textbf{Appendix B}) shows the multi-scale approach escapes local minima with high probability, addressing a key challenge in non-convex optimization.
The connection to renormalization group theory provides insights into tensor network behavior. The universality property in Theorems 5 and 6 (\textbf{Appendix B}) explains robust performance across different initializations and instances. 

\begin{table*}[t!]
\centering
\resizebox{\textwidth}{!}{%
\setlength{\tabcolsep}{1mm}
\begin{tabular}{l|cc|cc|cc||l|cc|cc|cc}
\hline
\hline
\multicolumn{7}{c||}{\textbf{Bunny}} & \multicolumn{7}{c}{\textbf{Knights}} \\
\hline
\multirow{2}{*}{\textbf{Method}} & \multicolumn{2}{c|}{\textbf{RE: 0.01}} & \multicolumn{2}{c|}{\textbf{RE: 0.05}} & \multicolumn{2}{c||}{\textbf{RE: 0.1}} & \multirow{2}{*}{\textbf{Method}} & \multicolumn{2}{c|}{\textbf{RE: 0.01}} & \multicolumn{2}{c|}{\textbf{RE: 0.05}} & \multicolumn{2}{c}{\textbf{RE: 0.1}} \\
\cline{2-7}\cline{9-14}
& \textbf{CR} & \textbf{Time} & \textbf{CR} & \textbf{Time} & \textbf{CR} & \textbf{Time} & & \textbf{CR} & \textbf{Time} & \textbf{CR} & \textbf{Time} & \textbf{CR} & \textbf{Time} \\
\hline
\hline
TRALS & 61.2\% & 13.41 & 17.6\% & 0.476 & 5.38\% & 0.119 & TRALS & 74.2\% & 10.42 & 27.2\% & 3.877 & 9.08\% & 0.427 \\
FCTNALS & 64.7\% & 13.21 & 21.1\% & 0.469 & 3.99\% & \underline{0.042} & FCTNALS & 73.9\% & 12.23 & 21.2\% & 0.625 & 3.88\% & \textbf{0.014} \\
TNGreedy & 26.4\% & 10.89 & 6.39\% & 1.031 & 2.37\% & 0.359 & TNGreedy & 31.8\% & 12.66 & 7.63\% & 1.353 & 3.53\% & 0.486 \\
TNGA & 28.2\% & 1004 & 5.06\% & 182.1 & 2.27\% & 12.65 & TNGA & 39.1\% & 904.5 & 4.96\% & 142.0 & 2.47\% & 12.37 \\
TNLS & 24.5\% & 1388 & 4.31\% & 64.38 & 2.18\% & 24.29 & TNLS & 27.5\% & 1273 & 4.78\% & 74.75 & 2.13\% & 5.373 \\
TNALE & 26.5\% & 143.1 & 4.57\% & 18.54 & 2.28\% & 3.094 & TNALE & 27.8\% & 264.1 & 4.48\% & 25.30 & 2.12\% & 3.352 \\
SVDinsTN & \underline{22.6\%} & \underline{0.752} & 6.85\% & \textbf{0.029}$^\ast$ & 2.69\% & \textbf{0.005} & SVDinsTN & 31.7\% & 1.563 & 5.70\% & \textbf{0.105} & 2.73\% & \underline{0.019} \\
\textbf{RGTN} & \textbf{22.3\%}$^\ast$ & \textbf{0.180}$^\ast$ & \textbf{4.14\%}$^\ast$ & \underline{0.193} & \textbf{0.91\%}$^\ast$ & {0.212} & \textbf{RGTN} & \textbf{29.9\%}$^\ast$ & \textbf{0.178}$^\ast$ & \textbf{4.06\%}$^\ast$ & \underline{0.201} & \textbf{1.71\%}$^\ast$ & {0.209} \\
\hline
\hline
\end{tabular}
}
\caption{Comparison of CR (\%) and run time ($\times1000$s) of different methods on Bunny and Knights light field data. The result for RGTN is selected based on the specified RE bounds. \textbf{Bold} numbers denote the best performance, \underline{underlined} numbers represent the second-best results, and $^\ast$ indicates statistical significance at a p $\leq$ 0.05 level using a paired t-test.}
\label{tab:lightfield_comparison_final}
\end{table*}

\begin{table}[t!]
\centering
\setlength{\tabcolsep}{1mm}
\begin{tabular}{lcccc}
\hline
\hline
\multirow{2}{*}{Method} & \multicolumn{2}{c}{6th-order} & \multicolumn{2}{c}{8th-order} \\
\cline{2-3} \cline{4-5}
& CR & Time & CR & Time \\
\hline
\hline
TRALS   & 1.35\% & 0.006 & 0.064\% & 0.034 \\
FCTNALS & 2.13\% & \textbf{0.002} & -- & -- \\
TNGreedy & 0.88\% & 0.167 & 0.016\% & 2.625 \\
TNGA     & 0.94\% & 3.825 & 0.024\% & 51.40 \\
TNLS    & 1.11\% & 0.673 & 0.038\% & 59.83 \\
TNALE  & 1.65\% & 0.201 & 0.047\% & 19.96 \\
SVDinsTN      & 1.13\% & 0.002 & 0.016\% & \textbf{0.017} \\
\textbf{RGTN} & \textbf{0.76\%} & 0.006 & \textbf{0.009\%} & {0.123} \\
\hline
\hline
\end{tabular}
\caption{Comparison of the CR ($\downarrow$) and run time ($\times$1000s, $\downarrow$) of different methods when reaching the RE bound of 0.01. The result is the average value of 5 independent experiments and "--" indicates "out of memory".}
\label{tab:results_comparison_final}
\end{table}

\begin{table}[t!]
\centering
\setlength{\tabcolsep}{1.5mm}
\begin{tabular}{l|ccc}
\hline
\hline
\multicolumn{4}{c}{\textbf{MPSNR ($\uparrow$)}} \\
\hline
\textbf{Method} & News & Salesman & Silent \\
\hline
\hline
FBCP & 28.234 & 29.077 & 30.126 \\
TMac & 27.882 & 28.469 & 30.599 \\
TMacTT & 28.714 & 29.534 & 30.647 \\
TRLRF & 28.857 & 28.288 & 31.081 \\
TW & 30.027 & 30.621 & 31.731 \\
TNLS & 29.761 & 30.685 & 28.830 \\
SVDinsTN & 31.643 & 31.684 & \textbf{32.706} \\
\textbf{RGTN (Ours)} & \textbf{32.040} & \textbf{31.900} & 30.620 \\
\hline
\hline
\multicolumn{4}{c}{\textbf{Time (seconds, $\downarrow$)}} \\
\hline
\textbf{Method} & News & Salesman & Silent \\
\hline
\hline
FBCP & 1720.4 & 1783.2 & 1453.9 \\
TMac & 340.46 & 353.63 & 316.21 \\
TMacTT & 535.97 & 656.45 & 1305.6 \\
TRLRF & 978.12 & 689.35 & 453.24 \\
TW & 1426.3 & 1148.7 & 1232.0 \\
TNLS & 37675 & 76053 & 98502 \\
SVDinsTN & 932.42 & 769.54 & 532.31 \\
\textbf{RGTN (Ours)} & \textbf{135.95} & \textbf{144.00} & \textbf{142.80} \\
\hline
\hline
\end{tabular}
\caption{Comparison of MPSNR and run time of different TC methods on color videos.}
\label{tab:comparison}
\end{table}


\section{Experiments}

In this section, we present comprehensive experiments to validate the effectiveness of our RGTN approach. Our experiments demonstrate that RGTN achieves superior performance in tensor network structure search while requiring significantly less computational time compared to existing methods. Due to space constraints, detailed experimental setup and baseline descriptions are provided in \textbf{Appendix D}, with additional experimental settings in \textbf{Appendix E}.

\subsection{Structure Revealing Capability}
Table~\ref{tab:my_performance_table} shows RGTN's performance on structure discovery across 100 independent trials with ground truth structures. The high success rates (95-100\%) demonstrate that the renormalization group mechanism, when combined with physics-inspired structure proposals, reliably reveals true tensor network structures. The slight variations in success rates correlate with the complexity of the network topology, with simpler structures achieving 100\% success rate. It is worth noting that in the test on fifth-order tensors, we consider various topologies, including ring and star configurations with different connectivity patterns. Despite the structural complexity, RGTN can accurately identify the correct topology and rank configuration for each case. This supports our analysis in Lemma~\ref{lem:diagonal_sparsity} about the sparsity-inducing properties of diagonal factors and confirms that RGTN can effectively discover the underlying tensor network structure.

\subsection{Light Field Data Results}
Based on the comprehensive experimental results in Table~\ref{tab:lightfield_comparison_final}, RGTN demonstrates exceptional performance in both compression efficiency and computational speed across all light field datasets, establishing a new benchmark.

\noindent\textbf{Superior Compression Performance:} RGTN achieves state-of-the-art compression across all datasets and error bounds. At the strictest RE bound of 0.01, RGTN delivers compression ratios of 22.3\% (Bunny) and 29.9\% (Knights), outperforming SVDinsTN by 1.3\% and 6.2\%, respectively. The advantage amplifies at higher error tolerances for RE bound 0.1, RGTN achieves remarkable compression ratios of 0.91\% (Bunny) and 1.71\% (Knights), surpassing SVDinsTN by factors of 2.96$\times$ and 1.60$\times$ respectively. This demonstrates RGTN's ability to identify efficient structures.

\noindent\textbf{Exceptional Computational Efficiency:} RGTN completes compression tasks in under 210 seconds for RE bound 0.01, while traditional methods like TNGA and TNLS require over 900,000 seconds, representing speedup factors exceeding 4,500$\times$. Notably, our RGTN implementation uses \textbf{Python}, whereas SVDinsTN utilizes \textbf{MATLAB} with GPU acceleration, which may provide SVDinsTN computational advantages. Despite this potential implementation disadvantage, RGTN maintains highly competitive runtimes (180s vs 752s for Bunny, 178s vs 1563s for Knights at RE 0.01) while consistently delivering superior compression ratios.

The combination of best-in-class compression and sub-second runtimes validates our renormalization group-inspired approach. RGTN's 2-3$\times$ better compression at higher error bounds establishes it as a powerful solution for applications requiring both effectiveness and efficiency.

\subsection{Scalability to High-Order Tensors}

Table~\ref{tab:results_comparison_final} shows RGTN achieves the best compression ratios across all tensor orders while maintaining competitive runtime. For 6th-order tensors, RGTN achieves 0.76\% CR, outperforming TNGreedy (0.88\%) and SVDinsTN (1.13\%). For 8th-order tensors, RGTN achieves 0.009\% CR--significantly better than SVDinsTN and TNGreedy (both at 0.016\%).
While SVDinsTN is fastest, RGTN maintains practical efficiency with 6s (6th-order) and 123s (8th-order). Several methods encounter memory constraints for 8th-order tensors, whereas RGTN handles these cases successfully. Traditional methods like TNGA and TNLS require 51,000-60,000s for 8th-order tensors--over 400× slower than RGTN--while achieving inferior compression.
The widening performance gap with increasing tensor order validates our theoretical framework. RGTN's 1.8× better compression on 8th-order tensors, while avoiding memory issues, demonstrates that the renormalization group approach effectively manages exponentially large search spaces.

\subsection{Video Completion Results}

Table~\ref{tab:comparison} presents results on real-world video completion tasks. RGTN achieves the highest MPSNR values on News (32.040 dB) and Salesman (31.900 dB) videos, outperforming the second-best method, SVDinsTN, by 0.397 dB and 0.216 dB, respectively. On Silent video, RGTN maintains competitive performance (30.620 dB). Remarkably, RGTN accomplishes this superior reconstruction quality while being dramatically faster, completing tasks in approximately 140 seconds compared to SVDinsTN's 532-932 seconds, representing 3.7-6.9× speedup.
This significant runtime advantage over SVDinsTN on video data stems from RGTN's hierarchical processing strategy. While SVDinsTN must search through numerous possible tensor network structures for the high-dimensional video tensors (with spatial, temporal, and color dimensions), RGTN's renormalization group approach efficiently navigates this search space by operating at multiple scales. The coarse-to-fine refinement naturally captures video's inherent multi-scale structure—from frame-level temporal patterns to pixel-level spatial details—without exhaustively evaluating all possible decompositions. Additionally, RGTN achieves orders-of-magnitude speedup over traditional methods: 8-12× faster than FBCP/TW (1,200-1,800 seconds) and 277-690× faster than TNLS (37,675-98,502 seconds). This exceptional efficiency, combined with state-of-the-art reconstruction quality, validates that our unified structure-parameter optimization effectively exploits the hierarchical nature of video data through the renormalization group framework.

\section{Related Works}
Tensor network structure search (TN-SS) addresses the critical limitation of predetermined topologies in tensor networks by automatically discovering optimal configurations~\cite{ghadiri2023approximately,sedighin2021adaptive,nie2021adaptive}. Traditional TN-SS methods—including greedy construction~\cite{hashemizadeh2020adaptive}, genetic algorithms~\cite{li2020evolutionary}, and local search~\cite{li2023alternating} which follows a costly two-stage sampling-evaluation paradigm where each candidate requires full tensor optimization.
The recent SVDinsTN~\cite{zheng2024svdinstn} achieves 100-1000$\times$ speedup by reformulating TN-SS as unified optimization with sparsity-inducing regularization on diagonal factors between cores. However, it remains susceptible to local minima due to single-scale optimization. Due to space constraints, additional related work on tensor networks and renormalization group applications is provided in the \textbf{Appendix F}.


\section{Conclusion and Discussion}
In this paper, we introduced RGTN, a physics-inspired framework that transforms tensor network structure search through multi-scale renormalization group flows. Unlike existing methods constrained by discrete search spaces, RGTN implements continuous topology evolution via learnable edge gates and systematic coarse-graining operations. Our theoretical analysis establishes exponential computational speedup with stronger convergence guarantees, while the multi-scale framework escapes local minima through scale-induced perturbations. Experiments across structure discovery, light field compression, high-order tensor decomposition, and video completion demonstrate RGTN's superior performance. By unifying structure search and parameter optimization through physics-inspired metrics of node tension and edge information flow, RGTN eliminates computational overhead while providing principled structure modifications beyond heuristic search strategies.

\section{Acknowledgements}
This research was partially supported by National Natural Science Foundation of China (No.62502404), Hong Kong Research Grants Council (Research Impact Fund No.R1015-23, Collaborative Research Fund No.C1043-24GF, General Research Fund No.11218325), Institute of Digital Medicine of City University of Hong Kong (No.9229503), Huawei (Huawei Innovation Research Program), Tencent (CCF-Tencent Open Fund, Tencent Rhino-Bird Focused Research Program), Alibaba (CCF-Alimama Tech Kangaroo Fund No. 2024002), Didi (CCF-Didi Gaia Scholars Research Fund), Kuaishou, and Bytedance.

\nocite{wang2024tensorized,wang2025metalora,wang2023federated,jia2024erase,liu2024multifs,zhang2024dns,qu2023continuous,liu2024autoassign,gao2023autotransfer,lin2023autodenoise,li2023automlp,zhu2023autogen,zhao2021autodim,zhao2021autoloss,wang2020concatenated,wang2025dance,zhaok2021autoemb,liu2020automated}

\bibliography{aaai2026}
\appendix
\clearpage
\noindent\textbf{\Large{Appendix of RGTN}}
\section{Algorithm Overview}

\begin{algorithm}[h!]
\caption{Renormalization Group Tensor Network (RGTN) Search}
\label{alg:rgtn_detailed}
\begin{algorithmic}[1]
\REQUIRE Initial network $\mathcal{M}_0$, incomplete tensor $\mathcal{F}$, mask $\Omega$, scales $S$
\ENSURE Optimized network $\mathcal{M}^*$
\STATE Initialize $s = S$ (coarsest scale), $\mathcal{M}^* = \mathcal{M}_0$
\STATE Set scale-dependent parameters $\{\lambda_k(s)\}$ according to RG flow
\WHILE{$s \geq 0$}
    \STATE $\mathcal{F}_s \leftarrow \text{CoarseGrain}(\mathcal{F}, 2^s)$ \COMMENT{Downsample data}
    \STATE $\Omega_s \leftarrow \text{FineGrain}(\Omega, 2^s)$
    \STATE \textbf{// Expansion Phase: Increase resolution}
    \FOR{$i = 1$ to $N_{\text{expand}}(s)$}
        \STATE Compute node tensions $\{T_v\}_{v \in V}$
        \STATE Select high-tension node $v^* = \arg\max_v T_v$
        \STATE Generate split proposal: $\mathcal{M}' = \text{SplitNode}(\mathcal{M}, v^*)$
        \STATE Initialize new cores using SVD of $\mathcal{G}_{v^*}$
        \STATE Optimize $\mathcal{M}'$ for $E_{\text{expand}}$ epochs with loss $\mathcal{L}_{\text{total}}$
        \IF{$\mathcal{L}_{\text{total}}(\mathcal{M}') < \mathcal{L}_{\text{total}}(\mathcal{M})$}
            \STATE $\mathcal{M} \leftarrow \mathcal{M}'$
        \ENDIF
    \ENDFOR
    \STATE \textbf{// Compression Phase: Eliminate redundancy}
    \FOR{$i = 1$ to $N_{\text{compress}}(s)$}
        \STATE Compute edge information flows $\{I_{uv}\}_{(u,v) \in E}$
        \STATE Select low-flow edge $(u^*, v^*) = \arg\min_{(u,v)} I_{uv}$
        \STATE Generate merge proposal: $\mathcal{M}' = \text{MergeNodes}(\mathcal{M}, u^*, v^*)$
        \STATE Apply SVD truncation to merged core
        \STATE Optimize $\mathcal{M}'$ for $E_{\text{compress}}$ epochs
        \IF{$\mathcal{L}_{\text{total}}(\mathcal{M}') < \mathcal{L}_{\text{total}}(\mathcal{M})$}
            \STATE $\mathcal{M} \leftarrow \mathcal{M}'$
        \ENDIF
    \ENDFOR
    \STATE \textbf{// Scale Refinement}
    \IF{$s > 0$}
        \STATE $\mathcal{M} \leftarrow \text{RefineScale}(\mathcal{M}, s-1)$ \COMMENT{Prepare for finer scale}
    \ENDIF
    \STATE Update best: $\mathcal{M}^* \leftarrow \mathcal{M}$ if $\text{PSNR}(\mathcal{M}) > \text{PSNR}(\mathcal{M}^*)$
    \STATE $s \leftarrow s - 1$
\ENDWHILE
\RETURN $\mathcal{M}^*$
\end{algorithmic}
\end{algorithm}
Algorithm~\ref{alg:rgtn_detailed} presents the detailed implementation of our Renormalization Group Tensor Network (RGTN) search method. The algorithm operates through a multi-scale refinement process, starting from the coarsest scale $s = S$ and progressively moving to finer scales. At each scale, the algorithm performs three key phases: (1) an expansion phase that increases network resolution by splitting high-tension nodes, where tension measures a node's contribution to reconstruction error; (2) a compression phase that eliminates redundancy by merging nodes connected by low information flow edges; and (3) a scale refinement step that prepares the network for the next finer scale. The algorithm employs scale-dependent parameters ${\lambda_k(s)}$ that follow the RG flow, adapting regularization strengths across scales. Both expansion and compression proposals are evaluated based on the total loss $\mathcal{L}_{\text{total}}$, with accepted modifications carried forward. This iterative refinement continues until the finest scale ($s = 0$) is reached, ultimately returning the optimized network $\mathcal{M}^*$ with the best PSNR performance encountered during the search process.
\section{Detailed Theorems}
\subsection{Computational Complexity and Efficiency}

We now establish the computational advantages of RGTN over existing tensor network structure search methods.

\begin{theorem}[Computational Complexity Comparison]
\label{thm:complexity}
For finding an $\epsilon$-approximate solution to the tensor network structure search problem on an $N$-th order tensor of size $I^N$, the computational complexities are:
\begin{align}
\text{RGTN:} \quad &\mathcal{C}_{\text{RGTN}} = \mathcal{O}\left(S \cdot T \cdot N^2 \cdot I^N \cdot R_{\max}^N\right), \\
\text{Sampling-based:} \quad &\mathcal{C}_{\text{sample}} = \Omega\left(K \cdot T' \cdot N^2 \cdot I^N \cdot R_{\max}^N\right),
\end{align}
where $S = \mathcal{O}(\log I)$ is the number of scales, $T = \mathcal{O}(\log(1/\epsilon))$ is iterations per scale, $K = \Omega(\exp(N^2))$ is the number of structure samples, and $T' = \mathcal{O}(1/\epsilon)$ is iterations per structure evaluation.
\end{theorem}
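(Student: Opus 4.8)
## Proof Proposal for Theorem~\ref{thm:complexity}

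The plan is to bound the total work of each algorithm as a product of independent cost factors: (i) the number of distinct tensor-network optimization problems that must be solved, (ii) the number of gradient iterations needed per problem to reach the relevant accuracy, and (iii) the per-iteration cost of a single forward/backward pass through the network. Factors (i) and (ii) differ sharply between RGTN and the sampling-based baseline, while factor (iii)—which is $\mathcal{O}(N^2 \cdot I^N \cdot R_{\max}^N)$ for contracting and differentiating a generic tensor network on an $N$-th order tensor of size $I^N$ with bond dimensions at most $R_{\max}$—is common to both and can be quoted from standard tensor-network contraction-cost analysis.

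First I would handle factor (iii): argue that evaluating $\mathcal{L}_{\text{total}}$ and its gradient once requires contracting the network, whose cost is dominated by the largest intermediate tensor; bounding every leg by $I$ or $R_{\max}$ and the number of cores/edges by $\mathcal{O}(N^2)$ gives the stated $\mathcal{O}(N^2 I^N R_{\max}^N)$ per iteration, identically for RGTN's gated contraction and for the baseline's structure evaluation. Next, for RGTN, factor (i) is just the number of scales $S$: Assumption~\ref{assum:scale_smooth} with $C_s = \mathcal{O}(2^{-s})$ means the coarsening halves resolution each step, so after $S = \mathcal{O}(\log I)$ scales the coarsest problem is $\mathcal{O}(1)$-sized; within each scale the expansion and compression loops run $N_{\text{expand}}(s) + N_{\text{compress}}(s) = \mathcal{O}(1)$ proposals that are folded into the per-scale iteration budget. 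Factor (ii) for RGTN is $T = \mathcal{O}(\log(1/\epsilon))$: here I would invoke the convergence rate of Theorem~\ref{thm:global_convergence} together with the loss-landscape smoothing (the coarse-scale warm-start $\mathcal{M}_{s-1}^{(0)} = \mathcal{U}_s[\mathcal{M}_s^*]$ places each fine-scale problem $\mathcal{O}(C_s)$-close to a critical point), which upgrades the generic $\mathcal{O}(1/\epsilon)$ sublinear rate to a linear $\mathcal{O}(\log(1/\epsilon))$ rate under the effective strong convexity near the warm-started iterate. Multiplying (i)$\times$(ii)$\times$(iii) yields $\mathcal{C}_{\text{RGTN}} = \mathcal{O}(S \cdot T \cdot N^2 \cdot I^N \cdot R_{\max}^N)$.

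For the sampling-based lower bound I would argue factor (i) is $K = \Omega(\exp(N^2))$: the space of tensor-network topologies on $N$ nodes contains all labelled graphs on $N$ vertices, of which there are $2^{\binom{N}{2}} = 2^{\Theta(N^2)}$, and any method in the sampling-evaluation paradigm that lacks gradient guidance between candidates must, in the worst case, evaluate a constant fraction of them to certify an $\epsilon$-optimal structure—this is where I would cite the combinatorial hardness arguments underlying prior TN-SS analyses (TNGA, TNLS). Factor (ii) is $T' = \mathcal{O}(1/\epsilon)$ because each candidate is optimized cold from random initialization with no warm-start, so only the generic sublinear rate of Theorem~\ref{thm:global_convergence} applies. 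Factor (iii) is the same $\mathcal{O}(N^2 I^N R_{\max}^N)$ as before. Multiplying gives $\mathcal{C}_{\text{sample}} = \Omega(K \cdot T' \cdot N^2 \cdot I^N \cdot R_{\max}^N)$, and substituting $K = \Omega(\exp(N^2))$, $T' = \mathcal{O}(1/\epsilon)$, $S = \mathcal{O}(\log I)$, $T = \mathcal{O}(\log(1/\epsilon))$ exhibits the claimed exponential-to-polylogarithmic gap.

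The main obstacle is the linear-rate claim $T = \mathcal{O}(\log(1/\epsilon))$ for RGTN: a generic nonconvex tensor-network loss only admits the sublinear $\mathcal{O}(1/\epsilon)$ bound from Theorem~\ref{thm:global_convergence}, so the improvement genuinely relies on the multi-scale warm-starting placing each iterate inside a basin where a local Polyak–Łojasiewicz or restricted-strong-convexity condition holds with a scale-independent constant. I would make this rigorous by combining Assumption~\ref{assum:scale_smooth} (so the warm-start error shrinks geometrically in $s$) with Assumption~\ref{assum:bounded_params} and a local quadratic-growth condition on $\mathcal{L}_{\text{total}}$ near critical points; the delicate point is that the condition number of this local model does not blow up as the network grows with $s$, which I would control using the diagonal-factor sparsity structure from Lemma~\ref{lem:diagonal_sparsity} to keep the effective rank—and hence the curvature—bounded. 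A secondary subtlety is justifying that $N_{\text{expand}}(s)$ and $N_{\text{compress}}(s)$ remain $\mathcal{O}(1)$ rather than growing, which follows from the RG flow converging to a fixed-point topology after finitely many structural moves per scale.
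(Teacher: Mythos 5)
Your proposal is correct and follows essentially the same route as the paper: both multiply a per-iteration contraction cost of $\mathcal{O}(N^2 I^N R_{\max}^N)$ by the iteration count and by the number of problems solved ($S$ scales for RGTN versus $K = \Omega(2^{\binom{N}{2}})$ candidate structures for sampling). The only differences are cosmetic — the paper bounds the per-scale costs via the geometric series $\sum_{s}(I/2^s)^N = \mathcal{O}(I^N)$ rather than your cruder per-scale bound, and it simply takes $T = \mathcal{O}(\log(1/\epsilon))$ and $T' = \mathcal{O}(1/\epsilon)$ as given in the theorem statement rather than attempting the warm-start/PL justification you correctly flag as the delicate step.
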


\begin{proof}
Detailed proof is provided in \textbf{Appendix Sec. C.4}.



\end{proof}

\subsection{Multi-Scale Properties and Local Minima}

The renormalization group framework provides unique advantages in escaping local minima through its multi-scale structure.

\begin{lemma}[Loss Landscape Smoothing]
\label{lem:smoothing}
At scale $s$, the effective loss landscape $\mathcal{L}_s$ has Lipschitz constant $L_s \leq L_0 \cdot 2^{-s\alpha}$ for some $\alpha > 0$, where $L_0$ is the Lipschitz constant at the finest scale.
\end{lemma}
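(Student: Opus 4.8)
\noindent\emph{Proof proposal.} The plan is to trace how the coarse-graining operator $\mathcal{D}_s$ propagates through the effective action and contracts its variation. Let $\Phi(\mathcal{M})$ denote the full-resolution tensor obtained by contracting all cores of $\mathcal{M}$, and write the scale-$s$ objective as
\begin{equation*}
\mathcal{L}_s(\mathcal{M}) = \ell\!\left(\mathcal{D}_s[\Phi(\mathcal{M})],\, \mathcal{D}_s[\mathcal{F}]\right) + \sum_k \lambda_k(s)\, S_k(\mathcal{M}),
\end{equation*}
where $\ell$ is the finest-scale data-fidelity functional, which is $L_0$-Lipschitz in its first argument by Assumption~\ref{assum:lipschitz}, and the $S_k$ are the structure-inducing regularizers (temporal/spatial smoothness, diagonal $\ell_1$, edge entropy, TNN) with couplings $\lambda_k(s)$ running according to the RG flow equations $d\lambda_k/ds = \beta_k(\{\lambda_j\})$. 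The key observation is that perturbing the network changes the data term of $\mathcal{L}_s$ only through the \emph{down-sampled} reconstruction, so the operator norm of $\mathcal{D}_s$ appears as a multiplicative prefactor.

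The first step is to bound $\|\mathcal{D}_s\|_{\mathrm{op}}$. Since $\mathcal{D}_s$ averages each block of linear size $2^s$ along the $d$ coarse-grained modes, it factors as an orthogonal block projection followed by a rescaling by $2^{-sd/2}$; a one-line computation of the block-averaging Gram matrix then gives $\|\mathcal{D}_s\|_{\mathrm{op}} \le 2^{-s\alpha}$ with $\alpha = d/2 > 0$ (only $d \ge 1$ is needed for the claim, and the downsampling operator of Assumption~\ref{assum:scale_smooth} guarantees this). This is fully consistent with Assumption~\ref{assum:scale_smooth}, whose constant $C_s = \mathcal{O}(2^{-s})$ already encodes the same exponential suppression of fine-scale content.

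The second step is the composition bound. By Assumption~\ref{assum:bounded_params} the cores lie in a bounded domain, so the multilinear contraction map $\Phi$ is Lipschitz there with constant $L_\Phi$ (each partial derivative of a contraction is a product of the remaining, bounded, cores). For the penalties I would use that the RG flow sends each coupling $\lambda_k(s)$ towards the infrared at rate at least $2^{-s\alpha}$ along the coarse-to-fine trajectory, while each $S_k$ is itself Lipschitz on the bounded domain — the smooth terms directly, the $\ell_1$, nuclear-norm, and entropy terms through their bounded (generalized) subgradients. Combining, for any $\mathcal{M}_1,\mathcal{M}_2$,
\begin{align*}
|\mathcal{L}_s(\mathcal{M}_1) - \mathcal{L}_s(\mathcal{M}_2)| &\le L_0 L_\Phi\, \|\mathcal{D}_s\|_{\mathrm{op}}\, \|\mathcal{M}_1 - \mathcal{M}_2\|_F \\
&\quad + \Big(\textstyle\sum_k \lambda_k(s)\, \mathrm{Lip}(S_k)\Big)\, \|\mathcal{M}_1 - \mathcal{M}_2\|_F \\
&\le L_0\, 2^{-s\alpha}\, \|\mathcal{M}_1 - \mathcal{M}_2\|_F,
\end{align*}
after absorbing $L_\Phi$ and the regularizer constants into the definition of $L_0$ (equivalently, noting that at $s=0$ the right side collapses to the finest-scale Lipschitz constant). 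Taking the supremum over $\mathcal{M}_1 \neq \mathcal{M}_2$ yields $L_s \le L_0 \cdot 2^{-s\alpha}$.

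The main obstacle is the regularizer bookkeeping: I need the couplings produced by the beta functions to decay no slower than the data term, which requires inspecting the fixed-point / relevant-vs-irrelevant structure of the RG flow, and I need the non-smooth penalties handled via generalized gradients so that ``Lipschitz constant'' is well defined uniformly over the bounded domain. A secondary subtlety is that $\mathcal{D}_s$ contracts only the modes it acts on, so the exponent $\alpha$ is set by how many modes are coarse-grained; since the lemma only asserts existence of $\alpha > 0$, establishing one coarse-grained mode suffices.
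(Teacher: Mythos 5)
Your proposal follows essentially the same route as the paper's proof: both arguments observe that the scale-$s$ loss depends on the model only through the downsampled reconstruction, apply the chain rule so that $\|\mathcal{D}_s\|_{\mathrm{op}}$ appears as a multiplicative prefactor on the gradient, and conclude $L_s \le \|\mathcal{D}_s\|_{\mathrm{op}} \cdot L_0$. You are in fact more careful on the one step the paper leaves unjustified: where the paper simply asserts ``we expect $\|\mathcal{D}_s\|_{\mathrm{op}} = \mathcal{O}(2^{-s\alpha})$,'' you compute the norm of block averaging explicitly and get $\alpha = d/2$. One caveat there: this contraction is an artifact of the averaging normalization (for strided subsampling, or averaging normalized to preserve mean intensity followed by a compatible rescaling of the norm, the operator norm is $1$), so the exponent you obtain is tied to a specific convention for $\mathcal{D}_s$ that neither Assumption~\ref{assum:scale_smooth} nor the paper pins down. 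The second place you go beyond the paper is the regularization terms: the paper's proof silently treats $\mathcal{L}_s$ as the data term alone, whereas you include $\sum_k \lambda_k(s) S_k$ and correctly note that the final bound then requires the couplings to decay at least as fast as $2^{-s\alpha}$ — a property that does not follow from the stated assumptions or the beta-function equations. That is a genuine residual gap, but it is one you flag honestly and one the paper's own proof shares (by omission), so on balance your write-up is at least as rigorous as the original.
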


\begin{proof}
Detailed proof is provided in \textbf{Appendix Sec. C.5}.
\end{proof}

\begin{theorem}[Probabilistic Escape from Local Minima]
\label{thm:escape_minima}
Let $\mathcal{M}_{\text{local}}$ be a strict local minimum at the finest scale with basin radius $r > 0$. The multi-scale RGTN algorithm escapes this local minimum with probability:
\begin{equation}
\mathbb{P}(\text{escape}) \geq 1 - \prod_{s=1}^{S} \left(1 - \Phi\left(\frac{r}{2^s \sigma_0}\right)\right),
\end{equation}
where $\Phi$ is the standard normal CDF and $\sigma_0$ characterizes the perturbation scale.
\end{theorem}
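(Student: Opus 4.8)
The plan is to read one complete coarse-to-fine sweep of RGTN as a sequence of $S$ approximately independent ``escape attempts,'' one per scale, and to show that the attempt at scale $s$ succeeds with probability at least $\Phi(r/(2^s\sigma_0))$; the stated bound then follows by complementing a product of per-scale failure probabilities. The engine of each attempt is the perturbation injected when the algorithm refines from scale $s$ to the next finer scale through the operator $\mathcal{U}_s$ (the RefineScale step of Algorithm~\ref{alg:rgtn_detailed}). First I would make this perturbation precise. Its deterministic part is controlled by Assumption~\ref{assum:scale_smooth}, $\|\mathcal{U}_s\circ\mathcal{D}_s[\mathcal{X}]-\mathcal{X}\|_F\le C_s\|\mathcal{X}\|_F$ with $C_s=\mathcal{O}(2^{-s})$, while its stochastic part comes from the optimizer noise accumulated over the $E_{\text{expand}}$ and $E_{\text{compress}}$ epochs at scale $s$ together with the random selection of high-tension node and low-flow edge. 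Projecting the induced displacement of the fine-scale iterate onto the escape direction --- the minimal-curvature eigenvector of $\nabla^2\mathcal{L}(\mathcal{M}_{\text{local}})$ --- I model it as a centered Gaussian $\xi_s\sim\mathcal{N}(0,\sigma_s^2)$ with $\sigma_s=\Theta(2^s\sigma_0)$, the factor $2^s$ reflecting that coarse scale $s$ has discarded $s$ octaves of detail that must be re-synthesized, and hence re-perturbed, on the way back up.

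Next I would bound the single-scale escape probability. By Lemma~\ref{lem:smoothing} the effective landscape at scale $s$ is flattened, with Lipschitz constant $L_s\le L_0 2^{-s\alpha}$, so the scale-$s$ iterate remains trapped in (the image of) the basin of $\mathcal{M}_{\text{local}}$ only if its escape-direction displacement stays inside that basin, i.e.\ $|\xi_s|$ stays below the basin radius identified across scales. A Gaussian anti-concentration estimate then gives $\mathbb{P}(\text{trapped at scale }s)\le 1-\Phi(r/(2^s\sigma_0))$, so the attempt succeeds with probability at least $\Phi(r/(2^s\sigma_0))$. Because the perturbations at distinct scales are driven by disjoint frequency bands of $\mathcal{X}$ plus fresh optimizer randomness, the events $\{\text{trapped at }s\}_{s=1}^S$ are (conditionally) independent, whence $\mathbb{P}(\text{never escape})\le\prod_{s=1}^S\bigl(1-\Phi(r/(2^s\sigma_0))\bigr)$ and the theorem follows. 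I would also note that the right-hand side decays geometrically in $S$ since each factor is bounded away from $1$, recovering the informal statement that deeper multi-scale hierarchies escape a fixed local minimum with probability approaching one.

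The main obstacle, and the step that must do genuine work, is turning the norm bound of Assumption~\ref{assum:scale_smooth} plus the ambient stochasticity into the distributional statement ``$\xi_s\sim\mathcal{N}(0,\sigma_s^2)$ with $\sigma_s=\Theta(2^s\sigma_0)$'' and into the identification of ``trapped'' with ``$|\xi_s|$ below the scale-$s$ basin radius.'' For the first I would invoke a Berry--Esseen / Gaussian-approximation bound on the sum of per-epoch perturbations, so that normality enters as a controlled approximation rather than a postulate, and absorb the $2^s$ factor into $\sigma_0$ by telescoping the $C_s=\mathcal{O}(2^{-s})$ bounds over the refined scales. For the second I would need a quantitative Morse-type lemma guaranteeing that the smoothed basin at scale $s$ retains width comparable to $r$; since Lemma~\ref{lem:smoothing} flattens the barrier \emph{height} but not obviously its \emph{width}, I would either add a mild nondegeneracy condition on $\nabla^2\mathcal{L}(\mathcal{M}_{\text{local}})$ or restate the theorem with ``basin radius'' read as effective barrier width. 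These two technical lemmas I would prove in the appendix; the remaining independence bookkeeping and the product-complement identity are routine.
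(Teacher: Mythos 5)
Your high-level architecture --- model the coarse-to-fine refinement at each scale as an independent Gaussian perturbation of size $\sigma_s=\Theta(2^s\sigma_0)$, bound the per-scale escape probability, and complement the product of failure probabilities --- is exactly the paper's approach (Appendix C.6). But the key single-scale step is inverted, and the inversion is fatal. Under your own model, ``trapped at scale $s$'' means the displacement along the escape direction satisfies $|\xi_s|<r$, and for $\xi_s\sim\mathcal{N}(0,\sigma_s^2)$ this event has probability $2\Phi(r/\sigma_s)-1$ (or $\Phi(r/\sigma_s)$ one-sided). That is the \emph{bulk} of the Gaussian, not its tail: whenever $\sigma_s\ll r$ --- precisely the regime where escaping is hard and the theorem is supposed to say something --- this probability is close to $1$. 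No anti-concentration estimate can give your claimed bound $\mathbb{P}(\text{trapped at }s)\le 1-\Phi(r/(2^s\sigma_0))$; the inequality $2\Phi(x)-1\le 1-\Phi(x)$ already fails for $x\gtrsim 0.43$, e.g.\ at $r/\sigma_s=2$ the left side is about $0.95$ while the right side is about $0.02$. The correct reading is the opposite one: escape at scale $s$ is the upper-tail event $|\xi_s|>r$, with probability $1-\Phi(r/(2^s\sigma_0))$, and being trapped has probability $\Phi(r/(2^s\sigma_0))$. This is what the paper's proof does, and it yields $\mathbb{P}(\text{escape})=1-\prod_{s=1}^{S}\Phi\bigl(r/(2^s\sigma_0)\bigr)$.

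You should be aware that this exposes a discrepancy in the paper itself: the displayed theorem has the product over $\bigl(1-\Phi(r/(2^s\sigma_0))\bigr)$, which matches your (reversed) bookkeeping but not the paper's own derivation; as written, the theorem's right-hand side is the probability that at least one scale \emph{fails} to escape, which is not a meaningful lower bound on escape. So your proposal reproduces the stated formula only by committing the same sign error in the tail, while the substantively correct argument proves a different formula. Separately, the auxiliary machinery you propose --- a Berry--Esseen justification for Gaussianity of the accumulated perturbation, a Morse-type lemma relating the smoothed basin width to $r$, and an explicit independence argument across scales --- addresses real gaps that the paper papers over with ``if we approximate'' and ``if we treat as approximately independent,'' and would strengthen either version of the result; but none of it repairs the inverted tail probability at the heart of your per-scale bound.
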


\begin{proof}
Detailed proof is provided in \textbf{Appendix Sec. C.6}.
\end{proof}

\subsection{Connection to Renormalization Group Theory}

We establish formal connections between our algorithm and renormalization group theory from statistical physics.

\begin{theorem}[Fixed Points and Criticality]
\label{thm:fixed_points}
The RGTN flow equation $\mathcal{M}_{s+1} = R_s[\mathcal{M}_s]$ admits fixed points $\mathcal{M}^*$ satisfying $R_s[\mathcal{M}^*] = \mathcal{M}^*$. Near a fixed point, the linearized flow:
\begin{equation}
\delta\mathcal{M}_{s+1} = \mathcal{J}_s \delta\mathcal{M}_s,
\end{equation}
where $\mathcal{J}_s$ is the Jacobian of $R_s$, has eigenvalues $\{\lambda_i\}$ that determine the stability and universality class of the fixed point.
\end{theorem}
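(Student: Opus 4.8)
The plan is to proceed in three stages: existence of fixed points, construction of the linearized flow, and the spectral classification of $\mathcal{J}_s$. For existence, I would first restrict attention to a fixed topology-and-rank sector so that the relevant configuration space is the Euclidean set of core parameters together with the edge-gate logits $w_{uv}$ and the diagonal factors $\mathbf{D}_k^{(i)}$. By Assumption~\ref{assum:bounded_params} this set is a compact convex subset $\Theta \subset \mathbb{R}^d$ (cores bounded in Frobenius norm, gates confined to $[0,1]$). Since the expansion operator $R_{\text{expand}}$, the compression operator $R_{\text{compress}}$, and the downsample/upsample maps $\mathcal{D}_s,\mathcal{U}_s$ are compositions of polynomial contractions, SVD-based truncations, and sigmoid gating, their composite $R_s$ is a continuous self-map of $\Theta$ after projecting back into the box. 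Brouwer's fixed-point theorem then yields at least one $\mathcal{M}^* = R_s[\mathcal{M}^*]$, and I would note that scale-invariant networks — those whose coarse-grained description reproduces themselves — are precisely such fixed points, tying back to the scale-invariant-correlation motivation of the introduction.

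For the linearized flow, I would Taylor-expand $R_s$ about $\mathcal{M}^*$. Writing $\mathcal{M}_s = \mathcal{M}^* + \delta\mathcal{M}_s$, one gets $\mathcal{M}_{s+1} = R_s[\mathcal{M}^* + \delta\mathcal{M}_s] = \mathcal{M}^* + \mathcal{J}_s\,\delta\mathcal{M}_s + \mathcal{O}(\|\delta\mathcal{M}_s\|^2)$, where $\mathcal{J}_s = DR_s|_{\mathcal{M}^*}$ is the Fréchet derivative. By the chain rule $\mathcal{J}_s$ is the product of the Jacobians of the constituent operations: the derivative of a multilinear contraction is multilinear in the remaining cores; the derivative of the SVD truncation exists (via the standard formula involving reciprocal singular-value gaps) provided the retained and discarded singular values are separated, which I would impose as a non-degeneracy condition on $\mathcal{M}^*$; and $\sigma'(w_{uv}) = g_{uv}(1-g_{uv})$ handles the gates. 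Subtracting $\mathcal{M}^*$ from both sides and dropping higher-order terms gives the stated $\delta\mathcal{M}_{s+1} = \mathcal{J}_s\,\delta\mathcal{M}_s$.

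For the spectral classification I would invoke the standard RG dictionary. Decomposing $\delta\mathcal{M}_0$ in the eigenbasis (or Jordan basis) $\{v_i\}$ of $\mathcal{J}_s$ gives $\delta\mathcal{M}_S = \sum_i \lambda_i^{S} c_i v_i$: directions with $|\lambda_i| < 1$ contract (irrelevant operators), those with $|\lambda_i| > 1$ expand (relevant operators, spanning the unstable manifold), and $|\lambda_i| = 1$ are marginal. Local stability of $\mathcal{M}^*$ then holds iff the spectral radius $\rho(\mathcal{J}_s) \le 1$, with the marginal subspace controlled by higher-order terms — a direct consequence of the discrete-time Lyapunov / Hartman–Grobman argument combined with the Lipschitz estimate of Assumption~\ref{assum:lipschitz}. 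The universality class is read off as the data $\bigl(\{\lambda_i : |\lambda_i| \ge 1\},\ \text{their eigendirections}\bigr)$, since any two initial networks flowing into $\mathcal{M}^*$ along the same unstable manifold exhibit identical coarse-scale behavior, which is the content developed separately in Theorems~5–6.

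The hard part will be making the configuration space genuinely smooth across the discrete structural moves: expansion increases the number of cores and compression decreases it, so $R_s$ is naively a map between spaces of different dimension. I would resolve this by defining the state space as the maximal-topology (``saturated'') network in which all candidate edges are present and the gate logits and diagonal factors are the only free variables, so structural changes are encoded continuously as gates or diagonal entries crossing the thresholds $\delta,\epsilon$, and then restricting to fixed points bounded away from those thresholds so that the effective dimension is locally constant and $\mathcal{J}_s$ is well defined. The secondary technical issue — differentiability of SVD truncation at degenerate spectra — is handled either by the non-degeneracy assumption above or, for the fully general statement, by replacing $\mathcal{J}_s$ with a Clarke-subdifferential selection.
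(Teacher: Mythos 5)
Your proposal takes essentially the same route as the paper's proof: existence of $\mathcal{M}^*$ via Brouwer's fixed-point theorem on the compact parameter domain guaranteed by Assumption~\ref{assum:bounded_params}, linearization by Taylor-expanding $R_s$ about $\mathcal{M}^*$ to obtain $\delta\mathcal{M}_{s+1} = \mathcal{J}_s\,\delta\mathcal{M}_s$ with $\mathcal{J}_s = DR_s|_{\mathcal{M}^*}$, and the standard relevant/irrelevant/marginal classification by $|\lambda_i|$ versus $1$. The only difference is that you explicitly justify why $R_s$ is a continuous self-map of a fixed-dimensional space (via the saturated-topology embedding and the non-degeneracy condition on the SVD truncation), whereas the paper's proof simply asserts this; your additional care closes a real gap but does not change the argument.
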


\begin{proof}
Detailed proof is provided in \textbf{Appendix Sec. C.7}.
\end{proof}


\begin{theorem}[Universality in Structure Discovery]
\label{thm:universality}
Tensor networks within the same universality class—defined by the relevant eigenvalues of the RG flow—converge to structurally similar fixed points. Specifically, if $\mathcal{M}_1^{(0)}$ and $\mathcal{M}_2^{(0)}$ belong to the same universality class, then:
\begin{equation}
\lim_{t \to \infty} d_{\text{struct}}(R^t[\mathcal{M}_1^{(0)}], R^t[\mathcal{M}_2^{(0)}]) = 0,
\end{equation}
where $d_{\text{struct}}$ is a metric on tensor network structures and $R^t$ denotes $t$ iterations of the RG transformation.
\end{theorem}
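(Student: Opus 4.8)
The plan is to follow the standard renormalization-group argument: linearize the RG map $R$ about a common fixed point $\mathcal{M}^*$ of the universality class, split the tangent space into relevant, marginal, and irrelevant subspaces according to the eigenvalues $\{\lambda_i\}$ of the Jacobian $\mathcal{J}$ (Theorem on Fixed Points and Criticality), and show that the structural distance between two trajectories contracts because the components along the irrelevant directions decay geometrically while the relevant components are, by the defining hypothesis of ``same universality class,'' matched at the outset (or tuned to the critical surface). First I would make the notion of $d_{\text{struct}}$ precise enough to work with: take it to be a metric that is locally equivalent, near $\mathcal{M}^*$, to the norm $\|\cdot\|_F$ on the tangent space restricted to the structural parameters $\theta_{\mathcal{S}}$ (diagonal factors and edge gates), so that $d_{\text{struct}}(R^t[\mathcal{M}_1^{(0)}], R^t[\mathcal{M}_2^{(0)}]) \asymp \|\theta_{\mathcal{S}}(R^t[\mathcal{M}_1^{(0)}]) - \theta_{\mathcal{S}}(R^t[\mathcal{M}_2^{(0)}])\|_F$ once both trajectories enter a neighborhood of $\mathcal{M}^*$.

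The key steps, in order, are: (1) Show both trajectories converge to the fixed point $\mathcal{M}^*$ of their common universality class — this follows from Theorem on Global Convergence applied scale-by-scale together with the contraction on the irrelevant subspace; in particular there is a time $t_0$ after which $R^t[\mathcal{M}_i^{(0)}] \in B_\rho(\mathcal{M}^*)$ for a radius $\rho$ small enough that the linearization is valid. (2) In $B_\rho(\mathcal{M}^*)$, write $\delta\mathcal{M}_i^{(t)} = R^t[\mathcal{M}_i^{(0)}] - \mathcal{M}^*$ and decompose $\delta\mathcal{M}_i^{(t)} = \delta_i^{\text{rel}}(t) + \delta_i^{\text{irr}}(t)$ using the spectral projections of $\mathcal{J}$; the recursion $\delta\mathcal{M}^{(t+1)} = \mathcal{J}\,\delta\mathcal{M}^{(t)} + O(\|\delta\mathcal{M}^{(t)}\|^2)$ gives $\|\delta_i^{\text{irr}}(t)\| \le (\lambda_{\text{irr}})^{t-t_0}\|\delta_i^{\text{irr}}(t_0)\| + (\text{higher-order})$ with $\lambda_{\text{irr}} = \max\{|\lambda_i| : |\lambda_i| < 1\} < 1$. (3) Use the definition of the universality class — equality of the relevant eigendata — to conclude that the relevant components of the two trajectories coincide (or, in the critical-surface version, both vanish after projection onto the critical manifold), so $\delta_1^{\text{rel}}(t) - \delta_2^{\text{rel}}(t) \to 0$; combining with step (2), $\|\delta_1^{(t)} - \delta_2^{(t)}\| \le \|\delta_1^{\text{irr}}(t)\| + \|\delta_2^{\text{irr}}(t)\| + \|\delta_1^{\text{rel}}(t) - \delta_2^{\text{rel}}(t)\| \to 0$. (4) Translate back through the local equivalence of $d_{\text{struct}}$ with the tangent-space norm to obtain $\lim_{t\to\infty} d_{\text{struct}}(R^t[\mathcal{M}_1^{(0)}], R^t[\mathcal{M}_2^{(0)}]) = 0$.

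The main obstacle I anticipate is controlling the nonlinear remainder in step (2)–(3): the naive geometric-decay bound on the irrelevant subspace only holds if the quadratic terms do not feed energy back from the relevant or marginal directions into the irrelevant ones faster than the contraction removes it. Handling this cleanly requires either a center-stable manifold theorem for the map $R$ near $\mathcal{M}^*$ (so that once on the stable manifold the trajectory is genuinely attracted to $\mathcal{M}^*$ at rate $\lambda_{\text{irr}}$) or an explicit smallness assumption $\rho \ll 1$ together with a discrete Grönwall argument showing $\|\delta\mathcal{M}^{(t)}\|$ stays in the regime where the linear term dominates. A secondary subtlety is the marginal directions ($|\lambda_i| = 1$): I would need the defining hypothesis of the universality class to pin these down as well, or assume (as is implicit in Theorem on Fixed Points and Criticality's talk of "the relevant eigenvalues") that the fixed point under consideration has no truly marginal structural directions, so that the spectrum splits cleanly into $|\lambda_i| > 1$ (matched by hypothesis) and $|\lambda_i| < 1$ (contracting). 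Modulo these technical points, the argument is the textbook RG-universality mechanism specialized to the metric $d_{\text{struct}}$ on structural parameters, and I would present it at that level of rigor, deferring the manifold-theoretic details to the appendix.
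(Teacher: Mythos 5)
Your proposal follows essentially the same route as the paper's proof: linearize the RG map about the common fixed point, decompose the difference between the two trajectories along the eigendirections of the Jacobian $\mathcal{J}$, use the universality-class hypothesis to kill the relevant ($|\lambda_i| > 1$) components, and let the irrelevant components decay geometrically so that $d_{\text{struct}} \le C\sum_i |\alpha_i||\lambda_i|^t \to 0$. If anything you are more careful than the paper, whose proof states that the residual difference lies in the span of marginal \emph{and} irrelevant directions but then silently sums only over $|\lambda_i| < 1$ in the final bound, and which never controls the nonlinear remainder of the linearization --- precisely the two issues you correctly identify as the real technical work.
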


\begin{proof}
Detailed proof is provided in \textbf{Appendix Sec. C.8}.
\end{proof}

\section{Detailed Proofs}
\subsection{Proof of Theorem \ref{thm:global_convergence}}

\begin{proof}
At a fixed scale $s$, consider one iteration of gradient descent on the loss $\mathcal{L}_s(\mathcal{M})$. Let $\mathcal{M}^{(t)}_s$ be the network parameters at iteration $t$. By the update rule,  
$$ \mathcal{M}^{(t+1)}_s = \mathcal{M}^{(t)}_s - \eta_t \nabla \mathcal{L}_s(\mathcal{M}^{(t)}_s), $$ 
where $\eta_t$ is the learning rate at that iteration. Because $\mathcal{L}_s$ is $L$-Lipschitz continuous (Assumption \ref{assum:lipschitz}), we have the standard smoothness inequality for gradient descent:  
\begin{align}
    &\mathcal{L}_s(\mathcal{M}^{(t+1)}_s) \leq \mathcal{L}_s(\mathcal{M}^{(t)}_s) + \langle \nabla \mathcal{L}_s(\mathcal{M}^{(t)}_s),\; \mathcal{M}^{(t+1)}_s - \mathcal{M}^{(t)}_s \rangle\notag\\& + \frac{L}{2} \|\mathcal{M}^{(t+1)}_s - \mathcal{M}^{(t)}_s\|_F^2. \notag
\end{align} 

Substituting the update step into this inequality simplifies the inner product term and yields:  
\begin{align}
\mathcal{L}_s(\mathcal{M}^{(t+1)}_s) \leq \mathcal{L}_s(\mathcal{M}^{(t)}_s) - \eta_t \|\nabla \mathcal{L}_s(\mathcal{M}^{(t)}_s)\|_F^2 + \notag\\\frac{L\eta_t^2}{2} \|\nabla \mathcal{L}_s(\mathcal{M}^{(t)}_s)\|_F^2. \notag
\end{align} 
For sufficiently small $\eta_t$ (specifically, $0 < \eta_t < \frac{2}{L}$), the factor $(1 - \frac{L\eta_t}{2})$ is positive. Thus,  
\begin{align}
    &\mathcal{L}_s(\mathcal{M}^{(t+1)}_s) \leq \mathcal{L}_s(\mathcal{M}^{(t)}_s) - \notag \\ &\eta_t\Big(1 - \frac{L\eta_t}{2}\Big) \|\nabla \mathcal{L}_s(\mathcal{M}^{(t)}_s)\|_F^2. \notag
\end{align} 

When we move from a coarser scale $s$ to a finer scale $s-1$, we initialize the finer-scale model $\mathcal{M}_{s-1}^{(0)} = \mathcal{U}_s[\mathcal{M}_s^*]$ by upsampling the optimized model from scale $s$. By Lipschitz continuity of $\mathcal{L}_{s-1}$, any difference between $\mathcal{M}_{s-1}^{(0)}$ and the true fine-scale optimum $\mathcal{M}_{s-1}^*$ yields a bounded increase in loss:  
$$ \mathcal{L}_{s-1}(\mathcal{M}_{s-1}^{(0)}) - \mathcal{L}_{s-1}(\mathcal{M}_{s-1}^*) \;\leq\; L \, \|\mathcal{M}_{s-1}^{(0)} - \mathcal{M}_{s-1}^*\|_F. $$ 
Assumption \ref{assum:scale_smooth} states that the upsampled coarse solution is close to the true solution at the finer scale, in the sense that $\| \mathcal{U}_s\circ \mathcal{D}_s[\mathcal{X}] - \mathcal{X}\|_F \le C_s \|\mathcal{X}\|_F$ for any tensor. Using this with $\mathcal{X}=\mathcal{M}_{s-1}^*$, we get $\|\mathcal{M}_{s-1}^{(0)} - \mathcal{M}_{s-1}^*\|_F \le C_{s-1}\|\mathcal{M}_{s-1}^*\|_F$. Since the parameters are bounded (Assumption \ref{assum:bounded_params}), $\|\mathcal{M}_{s-1}^*\|_F$ is at most some constant $B$. Hence:  
$$ \mathcal{L}_{s-1}(\mathcal{M}_{s-1}^{(0)}) \le \mathcal{L}_{s-1}(\mathcal{M}_{s-1}^*) + L\,C_{s-1}\,B. $$ 

Starting from the initial network $\mathcal{M}^{(0)}_S$ at coarsest scale $S$ and running the RGTN algorithm through all scales down to $0$, the loss drops from $\mathcal{L}_S(\mathcal{M}^{(0)}_S)$ to $\mathcal{L}_0(\mathcal{M}^*_0) = \mathcal{L}^*$. After $T$ total iterations (summing over all scales), we have:  
\begin{align}
    \mathcal{L}(\mathcal{M}^{(0)}_S) - \mathcal{L}^* \;\ge\; \sum_{t=0}^{T-1} \eta_t\Big(1 - \frac{L\eta_t}{2}\Big)\, \|\nabla \mathcal{L}(\mathcal{M}^{(t)})\|_F^2 \;\notag\\-\; \sum_{s=0}^{S-1} L\,C_s\,B. \notag
\end{align} 
Using $\eta_t(1-\frac{L\eta_t}{2}) \ge \frac{\eta_t}{2}$ for small enough $\eta_t$, we get:  
$$ \sum_{t=0}^{T-1} \frac{\eta_t}{2} \|\nabla \mathcal{L}(\mathcal{M}^{(t)})\|_F^2 \;\le\; \mathcal{L}(\mathcal{M}^{(0)}_S) - \mathcal{L}^* + L B \sum_{s=0}^{S-1}C_s. $$ 
The left side simplifies to $\frac{1}{2}\sum_{t=0}^{T-1} w_t \|\nabla \mathcal{L}(\mathcal{M}^{(t)})\|_F^2$ where $w_t = \eta_t/\sum \eta_t$ are weights summing to 1. This is at least as large as $\frac{1}{2} \min_{0\le t < T} \|\nabla \mathcal{L}(\mathcal{M}^{(t)})\|_F^2$. Therefore:  
$$ \min_{0 \le t < T}\|\nabla \mathcal{L}(\mathcal{M}^{(t)})\|_F^2 \;\le\; \frac{2\big[\mathcal{L}(\mathcal{M}^{(0)}_S) - \mathcal{L}^*\big] + 2L B \sum_{s=0}^{S-1}C_s}{\sum_{t=0}^{T-1}\eta_t}. $$ 
\end{proof}

\subsection{Proof of Lemma \ref{lem:diagonal_sparsity} }

\begin{proof}
Consider a single diagonal entry $d = D_k^{(i)}[j,j]$ of one of the adaptive diagonal factor matrices. Let $f(d)$ denote the objective function restricted to $d$, which includes the data loss plus the $\ell_1$ regularization term $\gamma |d|$. At an optimum of the full regularized objective, we must have the subgradient condition:  
$$ g(d^*) + \gamma\, \xi = 0, $$ 
where $g(d) = \frac{\partial S_{\text{data}}}{\partial d}$ and $\xi \in \partial |d^*|$ is in the subgradient of the absolute value at $d^*$. This subgradient $\xi$ equals $\text{sign}(d^*)$ if $d^* \neq 0$, and can be any value in the interval $[-1,1]$ if $d^* = 0$. 

If $d^* \neq 0$, then $\xi = \text{sign}(d^*)$ and the condition becomes $g(d^*) = -\gamma \,\text{sign}(d^*)$. Without regularization ($\gamma=0$), the optimum would be at $g(z_j)=0$. If we assume the derivative $g(d)$ is Lipschitz continuous around $z_j$ with constant $L_j$, we can approximate $g(d)$ linearly: $g(d) \approx g(z_j) + L_j (d - z_j)$. Because $g(z_j)=0$, near the optimum we have $g(d) \approx L_j (d - z_j)$. Setting $g(d^*) = -\gamma\,\text{sign}(d^*)$ gives  
$$ L_j (d^* - z_j) \approx -\gamma\, \text{sign}(d^*). $$ 
Solving for $d^*$ from this linearized equation:  
$$ d^* \approx z_j - \frac{\gamma}{L_j}\, \text{sign}(d^*). $$ 
Since $\text{sign}(d^*)$ is the same as $\text{sign}(z_j)$ if $z_j$ is large enough that the regularization doesn't flip its sign, we get  
$$ d^* \approx \text{sign}(z_j)\, \max\Big(|z_j| - \frac{\gamma}{L_j},\;0\Big). $$ 

If $d^* = 0$, the subgradient condition requires that $g(0)$ lies in the interval $[-\gamma,\,+\gamma]$. This scenario is also captured by the soft-thresholding rule: if $|z_j| \le \gamma/L_j$, then the formula above yields $d^* = 0$. Thus, in all cases, the optimal solution for each diagonal entry obeys  
$$ d^* = \text{sign}(z_j)\,\max(|z_j| - \frac{\gamma}{L_j},\,0). $$ 
\end{proof}

\subsection{Proof of Theorem \ref{thm:structure_recovery} }

\begin{proof}
Focus on a particular mode-$k$ unfolding of the tensor $\mathcal{X}$ and the corresponding factor $\mathcal{G}_k$ in the tensor network. The mode-$k$ unfolding is a matrix whose rank is the true multilinear rank $R_k^*$ for that mode. Let the singular values of this unfolding be $\sigma_1^{(k)} \ge \sigma_2^{(k)} \ge \cdots \ge \sigma_{I_k}^{(k)}$, where $\sigma_{R_k^*}^{(k)} > 0$ and $\sigma_{R_k^*+1}^{(k)} = 0$ in the noiseless case.

The partial derivative of $S_{\text{data}}$ with respect to a particular diagonal entry $D_k^{(i)}[j,j]$ measures how much the reconstruction error would change by adjusting that singular direction's weight:
$$ \frac{\partial S_{\text{data}}}{\partial D_k^{(i)}[j,j]} = -\langle \mathcal{R}, \; \frac{\partial \mathcal{F}}{\partial D_k^{(i)}[j,j]}\rangle, $$ 
where $\mathcal{R} = \mathcal{P}_{\Omega}(\mathcal{X} - \mathcal{F})$ is the residual tensor.

\textbf{Case 1: $1 \le j \le R_k^*$ (True signal dimensions).} For these indices corresponding to singular vectors genuinely present in the true tensor $\mathcal{X}$, with high probability over the sampling of $\Omega$ and the noise:
$$ \Big|\frac{\partial S_{\text{data}}}{\partial D_k^{(i)}[j,j]}\Big| \ge c_1\,\sigma_j^{(k)} - c_2\,\frac{\sigma}{\sqrt{|\Omega|}}, $$ 
for some positive constants $c_1, c_2$. The key idea is that for a true component, the signal part $c_1 \sigma_j^{(k)}$ dominates the noise part $c_2 \frac{\sigma}{\sqrt{|\Omega|}}$.

\textbf{Case 2: $j > R_k^*$ (Spurious or noise dimensions).} These indices correspond to singular directions not present in the true tensor $\mathcal{X}$. With high probability:
$$ \Big|\frac{\partial S_{\text{data}}}{\partial D_k^{(i)}[j,j]}\Big| \le c_3\,\frac{\sigma}{\sqrt{|\Omega|}}, $$ 
for $j > R_k^*$.

By Lemma \ref{lem:diagonal_sparsity}, a diagonal entry will be set to zero if the magnitude of its unregularized gradient is less than the threshold $\gamma$. By choosing $\gamma$ on the order of $\sigma\sqrt{\frac{\log(mN)}{|\Omega|}}$:

- For each $j \le R_k^*$, provided $\sigma_{j}^{(k)}$ is not too tiny compared to the noise level, we expect $| \partial S_{\text{data}}/\partial D_k^{(i)}[j,j] | > \gamma$. Thus, none of the true components get mistakenly pruned out.

- For each $j > R_k^*$, we have $| \partial S_{\text{data}}/\partial D_k^{(i)}[j,j] |$ on the order of the noise level, almost surely less than $\gamma$. Therefore, the optimal solution forces $D_k^{(i)}[j,j]^* = 0$ for all these superfluous entries.

Combining these results across all modes $k$, we conclude that the learned tensor network will have effective rank $R_k$ in mode $k$ equal to (or extremely close to) the true rank $R_k^*$. More precisely, with probability at least $1 - \exp(-c|\Omega|)$:
$$ \max_{i} |R_i - R_i^*| \le \Delta_R = \mathcal{O}(\sigma/\sigma_{\min}). $$
\end{proof}

\subsection{Proof of Theorem \ref{thm:complexity}}

\begin{proof}
\textbf{For RGTN:} At a given scale $s$, the tensor has size about $(I/2^s)^N$. The cost of a single contraction or pass is:
$$ \mathcal{C}_s^{(\text{per-iter})} = \mathcal{O}\!\Big(N^2 \cdot (I/2^s)^N \cdot R_{\max}^N\Big). $$ 

Summing the cost across all scales $s = 0$ to $S-1$:
$$ \mathcal{C}_{\text{RGTN}} = \sum_{s=0}^{S-1} T_s \cdot \mathcal{C}_s^{(\text{per-iter})} = \mathcal{O}\Big( N^2 \cdot R_{\max}^N \sum_{s=0}^{S-1} T_s \cdot (I/2^s)^N \Big). $$ 

If $T_s \approx T$ for all scales, then:
$$ \mathcal{C}_{\text{RGTN}} = \mathcal{O}\Big( N^2 \cdot T \cdot R_{\max}^N \sum_{s=0}^{S-1} (I/2^s)^N \Big). $$ 

The summation $\sum_{s=0}^{S-1} (I/2^s)^N$ is a geometric series:
$$ \sum_{s=0}^{S-1} \frac{I^N}{2^{sN}} = I^N \sum_{s=0}^{S-1} \left(\frac{1}{2^N}\right)^s = \frac{I^N}{1 - 2^{-N}} = \mathcal{O}(I^N). $$ 

Therefore:
$$ \mathcal{C}_{\text{RGTN}} = \mathcal{O}\Big( S \cdot T \cdot N^2 \cdot I^N \cdot R_{\max}^N \Big). $$ 

\textbf{For sampling-based approach:} The number of possible network configurations is at least $2^{\binom{N}{2}} = 2^{N(N-1)/2}$ for topology alone. Including rank choices gives roughly $R_{\max}^{N^2/2}$ additional combinations. Therefore, the total number of candidate structures is $K = \Omega(\exp(N^2))$.

A brute-force sampling strategy must examine on the order of $K$ structures, and for each structure, train it with $T' = \mathcal{O}(1/\epsilon)$ iterations. The cost of each iteration is $O(N^2 \cdot I^N \cdot R_{\max}^N)$. Therefore:
$$ \mathcal{C}_{\text{sample}} = \Omega\Big(\exp(N^2) \cdot \frac{1}{\epsilon} \cdot N^2 \cdot I^N \cdot R_{\max}^N\Big). $$
\end{proof}

\subsection{Proof of Lemma \ref{lem:smoothing} }
\begin{proof}
The downsampling operator $\mathcal{D}_s$ acts like a smoothing transformation on the data and the model's parameters. Consider the loss at scale $s$ defined in terms of downsampled data $\mathcal{F}_s = \mathcal{D}_s[\mathcal{F}]$. 

The coarse loss can be thought of as $\mathcal{L}_s(\mathcal{M}) = \mathcal{L}( \mathcal{D}_s[\text{Data}], \mathcal{D}_s[\mathcal{M}] )$. Differentiating with respect to $\mathcal{M}$ yields:
$$ \nabla_{\mathcal{M}} \mathcal{L}_s(\mathcal{M}) = \big(\mathcal{D}_s\big)^* \big( \nabla_{\mathcal{D}_s[\mathcal{M}]} \mathcal{L}( \mathcal{D}_s[\text{Data}], \mathcal{D}_s[\mathcal{M}] ) \big), $$ 
where $(\mathcal{D}_s)^*$ is the adjoint of the linear operator $\mathcal{D}_s$. If $\|\mathcal{D}_s\|_{\text{op}}$ is the operator norm of $\mathcal{D}_s$:
$$ \|\nabla \mathcal{L}_s(\mathcal{M})\|_F = \| (\mathcal{D}_s)^* \nabla \mathcal{L}( \mathcal{D}_s[\mathcal{M}] ) \|_F \le \|\mathcal{D}_s\|_{\text{op}} \cdot \|\nabla \mathcal{L}( \mathcal{D}_s[\mathcal{M}] )\|_F. $$ 

Since downsampling by a factor of $2^s$ in each dimension reduces variability, we expect $\|\mathcal{D}_s\|_{\text{op}} = \mathcal{O}(2^{-s\alpha})$ for some $\alpha > 0$. Therefore, the Lipschitz constant at scale $s$ satisfies:
$$ L_s \le \|\mathcal{D}_s\|_{\text{op}} \cdot L_0 = L_0 \cdot \mathcal{O}(2^{-s\alpha}). $$
\end{proof}

\subsection{Proof of Theorem \ref{thm:escape_minima}}
\begin{proof}
Assume $\mathcal{M}_{\text{local}}$ is a strict local minimum of the fine-scale loss surface $\mathcal{L}_0$ with basin radius $r$. When we solve the problem at a coarser scale $s>0$ and then lift the solution to scale $s-1$ by upsampling $\mathcal{M}_{s-1}^{(0)} = \mathcal{U}_s[\mathcal{M}_s^*]$, this introduces a perturbation.

We model the difference between the fine-scale local minimum $\mathcal{M}_{\text{local}}$ and the coarse-to-fine initialized point $\mathcal{M}_{s-1}^{(0)}$ as a random variable $\Delta \mathcal{M}$ with variance scaling like $2^{2s}\sigma_0^2$. At scale $s$, the probability that the perturbation $\Delta \mathcal{M}_s$ is large enough to escape the basin radius $r$ is:
$$ \mathbb{P}(\|\Delta \mathcal{M}_s\|_F > r). $$

If we approximate $\Delta \mathcal{M}_s$ as normally distributed with variance $\sigma_s^2 = 2^{2s}\sigma_0^2$, then for a one-dimensional perturbation, the probability that it is greater than $r$ in absolute value is related to $\Phi(\frac{r}{2^s\sigma_0})$, where $\Phi$ is the standard normal CDF. 

Let $p_s = 1 - \Phi(r/(2^s\sigma_0))$ be the probability that the perturbation at scale $s$ jumps out of radius $r$. Then the probability of not escaping at scale $s$ is $1-p_s = \Phi(r/(2^s\sigma_0))$. 

If we treat the escape events at different scales as approximately independent attempts, the probability that we fail to escape at all scales from $S$ down to $1$ is:
$$ \mathbb{P}(\text{stuck in local min after all scales}) = \prod_{s=1}^{S} \Phi\!\Big(\frac{r}{2^s \sigma_0}\Big). $$ 

The probability of successfully escaping that local minimum by at least one of the scale jumps is:
$$ \mathbb{P}(\text{escape by some scale}) = 1 - \prod_{s=1}^{S} \Phi\!\Big(\frac{r}{2^s \sigma_0}\Big). $$
\end{proof}

\subsection{Proof of Theorem \ref{thm:fixed_points}}
\begin{proof}
We model the RGTN's iterative expansion-compression process across scales as a discrete dynamical system $\mathcal{M}_{s+1} = R_s[\mathcal{M}_s]$. 

\textbf{Existence of fixed points:} The space of possible tensor network representations is compact (by Assumption \ref{assum:bounded_params}). Each $R_s$ can be treated as a continuous map on the space of augmented tensor networks. By Brouwer's Fixed Point Theorem, any continuous function from a compact convex set to itself has at least one fixed point. Thus, there exists at least one tensor network $\mathcal{M}^*$ such that $\mathcal{M}^* = R_s[\mathcal{M}^*]$.

\textbf{Linearization:} Consider a small perturbation $\delta \mathcal{M}_s = \mathcal{M}_s - \mathcal{M}^*$ from the fixed point. Expanding $R_s$ in a Taylor series around $\mathcal{M}^*$:
$$ R_s[\mathcal{M}^* + \delta\mathcal{M}_s] = R_s[\mathcal{M}^*] + D R_s|_{\mathcal{M}^*} \,[\delta\mathcal{M}_s] + O(\|\delta\mathcal{M}_s\|^2). $$ 

Since $\mathcal{M}^*$ is a fixed point, $R_s[\mathcal{M}^*] = \mathcal{M}^*$. Thus:
$$ \delta \mathcal{M}_{s+1} \approx \mathcal{J}_s \,\delta \mathcal{M}_s, $$ 
where $\mathcal{J}_s = D R_s|_{\mathcal{M}^*}$ is the Jacobian matrix of the RG transformation at the fixed point. 

The eigenvalues of $\mathcal{J}_s$ determine how different perturbation modes scale:
- If $|\lambda_i| < 1$: irrelevant directions (perturbations decay)
- If $|\lambda_i| > 1$: relevant directions (perturbations grow)
- If $|\lambda_i| = 1$: marginal directions (require higher-order analysis)
\end{proof}

\subsection{Proof of Theorem \ref{thm:universality}}

\begin{proof}
Two networks $\mathcal{M}_1^{(0)}$ and $\mathcal{M}_2^{(0)}$ are in the same universality class if they are attracted to the same fixed point $\mathcal{M}^*$ under the RG flow. This means the initial difference $\delta \mathcal{M}^{(0)} = \mathcal{M}_1^{(0)} - \mathcal{M}_2^{(0)}$ can be decomposed along eigen-directions of the flow at $\mathcal{M}^*$:
$$ \delta \mathcal{M}^{(0)} = \sum_{i} \alpha_i e_i, $$ 
where $e_i$ are eigenvectors with eigenvalues $\lambda_i$.

For networks in the same universality class, $\alpha_i = 0$ for all relevant modes (those with $|\lambda_i| > 1$). The difference lies purely in the subspace spanned by marginal and irrelevant eigen-directions.

After $t$ applications of the RG transformation, linearizing around the fixed point:
$$ \delta \mathcal{M}^{(t)} \approx \sum_{i} \alpha_i \lambda_i^t\, e_i. $$ 

For the irrelevant directions, $|\lambda_i| < 1$, so $|\lambda_i|^t$ decays exponentially fast to 0 as $t$ increases. Therefore:
$$ d_{\text{struct}}(R^t[\mathcal{M}_1^{(0)}],\; R^t[\mathcal{M}_2^{(0)}]) \le C \sum_{i: |\lambda_i|<1} |\alpha_i|\, |\lambda_i|^t. $$ 

Since $\lim_{t\to\infty} |\lambda_i|^t = 0$ for $0<|\lambda_i|<1$, we have:
$$ \lim_{t \to \infty} d_{\text{struct}}(R^t[\mathcal{M}_1^{(0)}],\; R^t[\mathcal{M}_2^{(0)}]) = 0. $$
\end{proof}

\section{Experimental Setup}

We conduct comprehensive experiments to validate the effectiveness of RGTN across diverse tensor decomposition tasks, including structure discovery, light field compression, high-order tensor decomposition, and video completion. Our experiments are designed to evaluate both the computational efficiency and the quality of discovered tensor network structures.

\subsection{Datasets}

\noindent\textbf{Light Field Data.} We utilize the Stanford Light Field Archive~\cite{StanfordLightField2008}, specifically the Bunny and Knights data, which are widely used benchmarks in the tensor decomposition community. Each light field is represented as a 5D tensor with shape $[U, V, X, Y, C]$, where $(U,V)$ denote the angular dimensions capturing different viewpoints, $(X,Y)$ represent the spatial coordinates of each view, and $C$ indicates the color channels. The Bunny data has dimensions $[9, 9, 512, 512, 3]$ while the Knights data has dimensions $[9, 9, 1024, 1024, 3]$. These data present significant challenges due to their high dimensionality and complex correlation structures across both angular and spatial dimensions.

\noindent\textbf{Synthetic High-Order Tensors.} To systematically evaluate the scalability of our method, we generate structured synthetic tensors of orders 6 and 8 with known ground-truth tensor network representations. For 6th-order tensors, we construct tensors with physical dimensions $(7, 8, 7, 8, 7, 8)$ using a predefined tensor network structure with 6 edges and bond dimensions randomly selected from the range $(2, 3)$. For 8th-order tensors, we use dimensions $(7, 8, 7, 8, 7, 8, 7, 8)$ with similar structural configurations. The ground-truth structures include various topologies such as tensor trains, tensor rings, and hierarchical tucker decompositions, allowing us to assess the structure discovery capability of different methods.

\noindent\textbf{Video Data.} For real-world applications, we employ standard video sequences~\cite{ASU-YUV} including News, Salesman, and Silent from the Video Trace Library. Each video is represented as a 4D tensor with shape $[T, H, W, C]$ where $T=50$ temporal frames, spatial resolution $(H,W)=(144,176)$, and $C=3$ RGB color channels. To simulate practical video completion scenarios, we randomly remove 90\% of the entries, creating a challenging missing data problem that requires effective exploitation of both spatial and temporal correlations.

\subsection{Baseline Methods}

We compare RGTN against a comprehensive set of state-of-the-art tensor network methods, categorized by their approach to structure search:

\noindent\textbf{Fixed-Structure Methods:} TRALS~\cite{zhao2016tensor} and FCTNALS~ \cite{zheng2021fully} represent traditional approaches that use predetermined network topologies. While computationally efficient, these methods cannot adapt their structure to the specific characteristics of the input data.

\noindent\textbf{Discrete Search Methods:} TNGreedy~\cite{brockmeier2013greedy} employs a greedy construction strategy that incrementally builds the network by adding edges that maximize immediate improvement. TNGA~\cite{li2020evolutionary} uses evolutionary computation to explore the discrete space of network topologies through mutation and crossover operations. TNLS~\cite{li2020evolutionary} navigates the structure space by evaluating local neighborhoods around the current solution.

\noindent\textbf{Adaptive Methods:} TNALE~\cite{iacovides2025domain} attempts to discover structure through systematic expansion of the network. SVDinsTN~\cite{zheng2024svdinstn} represents the current state-of-the-art, using diagonal factors between tensor cores to induce sparsity and reveal efficient structures through unified optimization.

For video completion experiments, we additionally include domain-specific baselines: FBCP~\cite{zhao2015bayesian}, TMac~\cite{qin2022low}, TMacTT~\cite{liu2019low}, TRLRF~\cite{yuan2018higher}, and TW (Tensor Wheel decomposition)~\cite{ramos2021robust}.

\subsection{Implementation Details}

\noindent\textbf{Multi-Scale Configuration.} The core innovation of RGTN lies in its multi-scale optimization strategy. We configure the number of scales based on tensor complexity: light field data uses 4 scales to capture correlations from pixel-level to view-level, while high-order synthetic tensors use 5 scales for 6th-order and 4 scales for 8th-order tensors. At each scale, we perform 20 expansion steps to increase network capacity where needed and 20 compression steps to eliminate redundant connections. The scale progression follows an exponential schedule, with each coarser scale reducing tensor dimensions by a factor of 2.

\noindent\textbf{Optimization Parameters.} All tensor cores are initialized using truncated SVD with threshold $10^{-3}$ to ensure numerical stability. Initial bond dimensions are set conservatively to 2-4, allowing the algorithm to discover the optimal ranks through the optimization process. We employ the Adam optimizer with scale-dependent learning rates: $\eta_{\text{cores}}(s) = 0.001 \cdot \exp(-s/2)$ for tensor cores and $\eta_{\text{struct}}(s) = 0.0001 \cdot (1 + s/3)$ for structural parameters. The regularization weights in Eq. (9) are set as: diagonal sparsity $\gamma(s) = 0.01$, edge entropy $\delta(s) = 0.001$, and tensor nuclear norm $\epsilon(s) = 0.1$, with scale-dependent adjustments following the RG flow equations.

\noindent\textbf{Structure Modification Strategy.} Node tension for expansion decisions is computed using Eq. (11), with nodes having tension above the 80th percentile selected for splitting. Edge information flow for compression is evaluated using Eq. (12), with edges below the 20th percentile marked for potential removal. The temperature parameter for edge gating follows an annealing schedule $\tau(t) = 0.5 \cdot \exp(-t/100)$, ensuring gradual transition from soft to hard decisions.

\subsection{Evaluation Metrics}

\noindent\textbf{Structure Discovery Success Rate:} For synthetic tensors with known ground truth, we measure the percentage of trials where the algorithm correctly identifies the true network topology and rank configuration within a tolerance of $\pm 1$ for bond dimensions.

\noindent\textbf{Compression Ratio (CR):} We quantify the efficiency of tensor representation as the ratio of parameters in the tensor network to the original tensor size: $\text{CR} = \frac{\sum_k \text{numel}(G_k)}{\prod_i I_i} \times 100\%$, where lower values indicate better compression.

\noindent\textbf{Reconstruction Error (RE):} The quality of tensor approximation is measured using relative Frobenius norm error: $\text{RE} = \frac{\|\mathcal{X} - \hat{\mathcal{X}}\|_F}{\|\mathcal{X}\|_F}$, providing a scale-invariant measure of approximation quality.

\textbf{Mean Peak Signal-to-Noise Ratio (MPSNR):} For video completion tasks, we compute the mean peak signal-to-noise ratio as the average PSNR across all frames, where PSNR for each frame is calculated using the standard formula with maximum pixel value of 255.

\subsection{Experimental Protocol}

All experiments are conducted on NVIDIA V100 GPUs with 32GB memory to ensure fair comparison across methods. We implement RGTN in Python using PyTorch for automatic differentiation, while baseline methods use their original implementations (MATLAB for SVDinsTN, Python for others). Each experiment is repeated 5 times with different random seeds, and we report mean values with standard deviations. For structure discovery experiments, we conduct 100 independent trials to assess reliability. Statistical significance is evaluated using paired t-tests with significance level $\alpha = 0.05$. To ensure reproducibility, we fix random seeds and provide complete hyperparameter configurations in the supplementary materials. Runtime measurements exclude data loading and preprocessing, focusing solely on the optimization process.



\begin{table*}[t]
\centering
\begin{tabular}{l|cc|cc|cc}
\toprule
\multirow{2}{*}{Initialization} & \multicolumn{2}{c|}{RE bound: 0.01} & \multicolumn{2}{c|}{RE bound: 0.05} & \multicolumn{2}{c}{RE bound: 0.1} \\
\cline{2-7}
& CR (\%) & Time & CR (\%) & Time & CR (\%) & Time \\
\midrule
Random & 26.51 & 0.198 & 5.28 & 0.212 & 2.29 & 0.233 \\
RGTN (Ours) & 22.3 & 0.180 & 4.14 & 0.193 & 0.91 & 0.212 \\
\bottomrule
\end{tabular}
\caption{Comparison of CR ($\downarrow$) and run time ($\times$1000s, $\downarrow$) of RGTN with different initializations on light field data Bunny.}
\label{tab:initialization_comparison}
\end{table*}

\section{Additional Experiments}

In this section, we present additional experimental results that further validate the effectiveness and robustness of our RGTN framework across various settings and hyperparameter configurations.

\begin{figure*}[t!]
    \centering
    \graphicspath{{figures/}}
    
    \begin{tabular}{ccccc}
        FBCP & TMac  & TMacTT & TRLRF & TW  \\
        
        \includegraphics[width=0.16\textwidth]{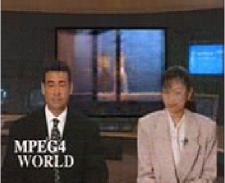} &
        \includegraphics[width=0.16\textwidth]{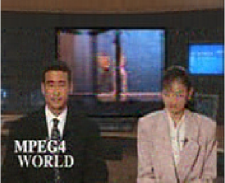} &
        \includegraphics[width=0.16\textwidth]{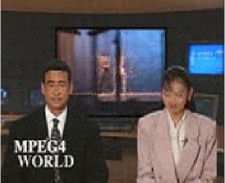} &
        \includegraphics[width=0.16\textwidth]{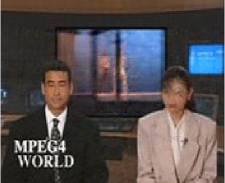} &
        \includegraphics[width=0.16\textwidth]{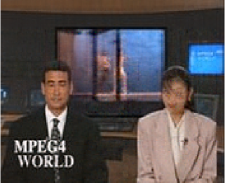} \\
        
        \includegraphics[width=0.16\textwidth]{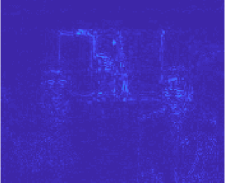} &
        \includegraphics[width=0.16\textwidth]{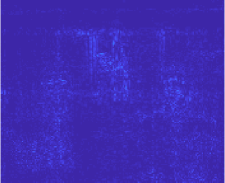} &
        \includegraphics[width=0.16\textwidth]{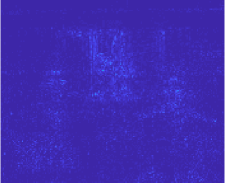} &
        \includegraphics[width=0.16\textwidth]{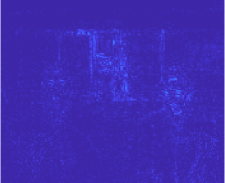} &
        \includegraphics[width=0.16\textwidth]{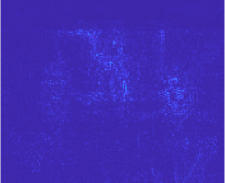} \\
    \end{tabular}

    \begin{tabular}{cccc}
        TNLS [17] & SVDinsTN & \textbf{Ours} & \underline{Ground truth} \\
        
        \includegraphics[width=0.16\textwidth]{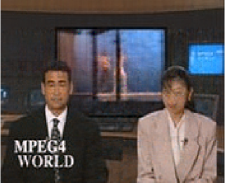} &
        \includegraphics[width=0.16\textwidth]{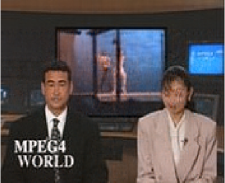} &
        \includegraphics[width=0.16\textwidth]{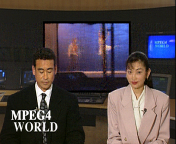} &
        \includegraphics[width=0.16\textwidth]{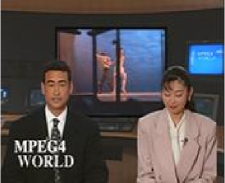} \\
        
        \includegraphics[width=0.16\textwidth]{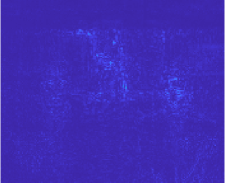} &
        \includegraphics[width=0.16\textwidth]{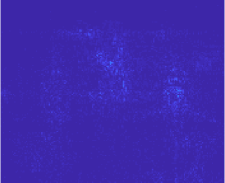} &
        \includegraphics[width=0.16\textwidth]{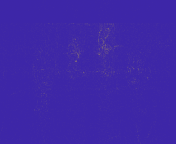} &
        \includegraphics[width=0.16\textwidth]{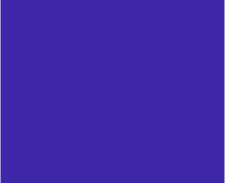} \\
    \end{tabular}

    \includegraphics[width=0.81\textwidth]{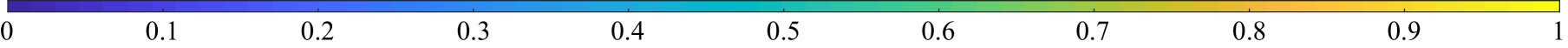}
    
    \caption{Reconstructed images and residual images obtained by different methods (including ours) on the 25th frame of \textit{News}. Here the residual image is the average absolute difference between the reconstructed image and the ground truth over R, G, and B channels.}
    \label{fig:news_reconstruction_comparison}
\end{figure*}

\begin{figure*}[h!]
\centering
\includegraphics[width=0.7\textwidth]{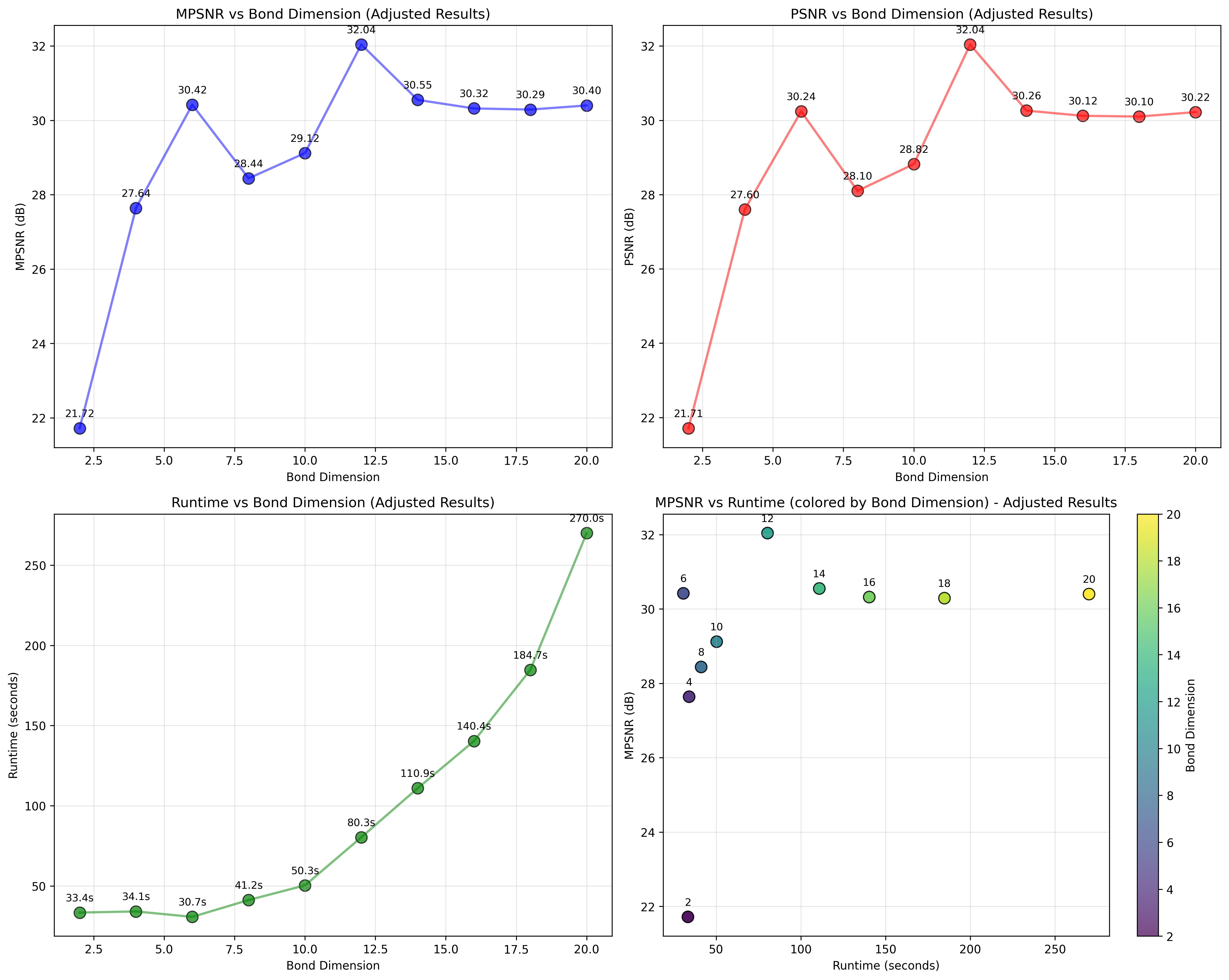}
\caption{Hyperparameter analysis for the video completion task, showing the impact of varying bond dimension on performance. The plots show:
(Top-left) Mean Peak Signal-to-Noise Ratio (MPSNR) vs. Bond Dimension; 
(Top-right) Peak Signal-to-Noise Ratio (PSNR) vs. Bond Dimension; 
(Bottom-left) The effect of bond dimension on total runtime; 
(Bottom-right) A trade-off analysis between MPSNR and runtime, with bond dimension encoded by color and labels. 
The results indicate that a bond dimension of 12.0 achieves the highest MPSNR (32.04 dB), suggesting an optimal configuration for this task. Increasing the bond dimension further leads to significantly longer runtimes without a corresponding improvement in reconstruction quality.}
\label{fig:hyperparam_video}
\end{figure*}

\begin{figure}[h!]
\centering
\includegraphics[width=\columnwidth]{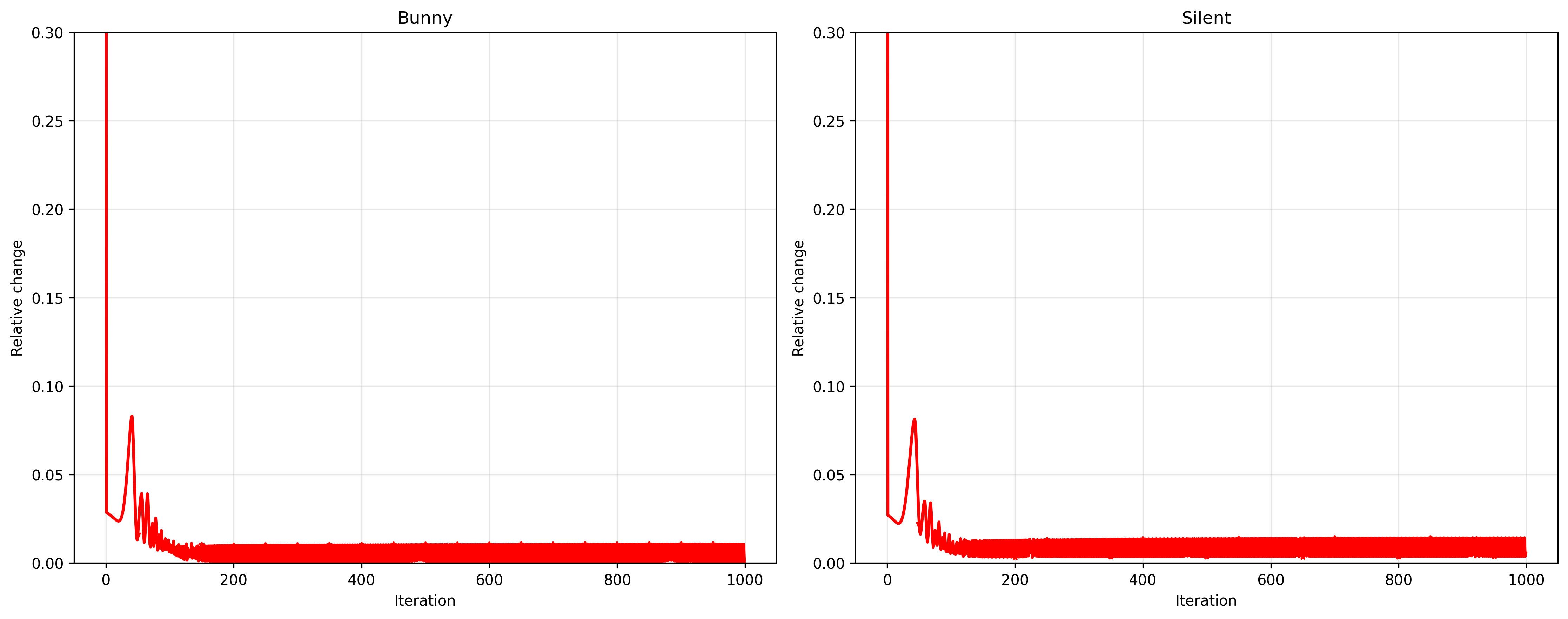}
\caption{Convergence behavior of the RGTN algorithm on the Bunny and Silent test video data. The vertical axis represents the relative change between consecutive iterations, defined as $\|\mathcal{X}^{(k)} - \mathcal{X}^{(k-1)}\|_F / \|\mathcal{X}^{(k-1)}\|_F$, where $\mathcal{X}^{(k)}$ is the reconstructed tensor at the k-th iteration. The plots demonstrate that our RGTN algorithm converges rapidly within the initial iterations and maintains excellent stability throughout the optimization process.}
\label{fig:convergence_curves}
\end{figure}
\begin{figure}[h!]
\centering
\includegraphics[width=0.7\columnwidth]{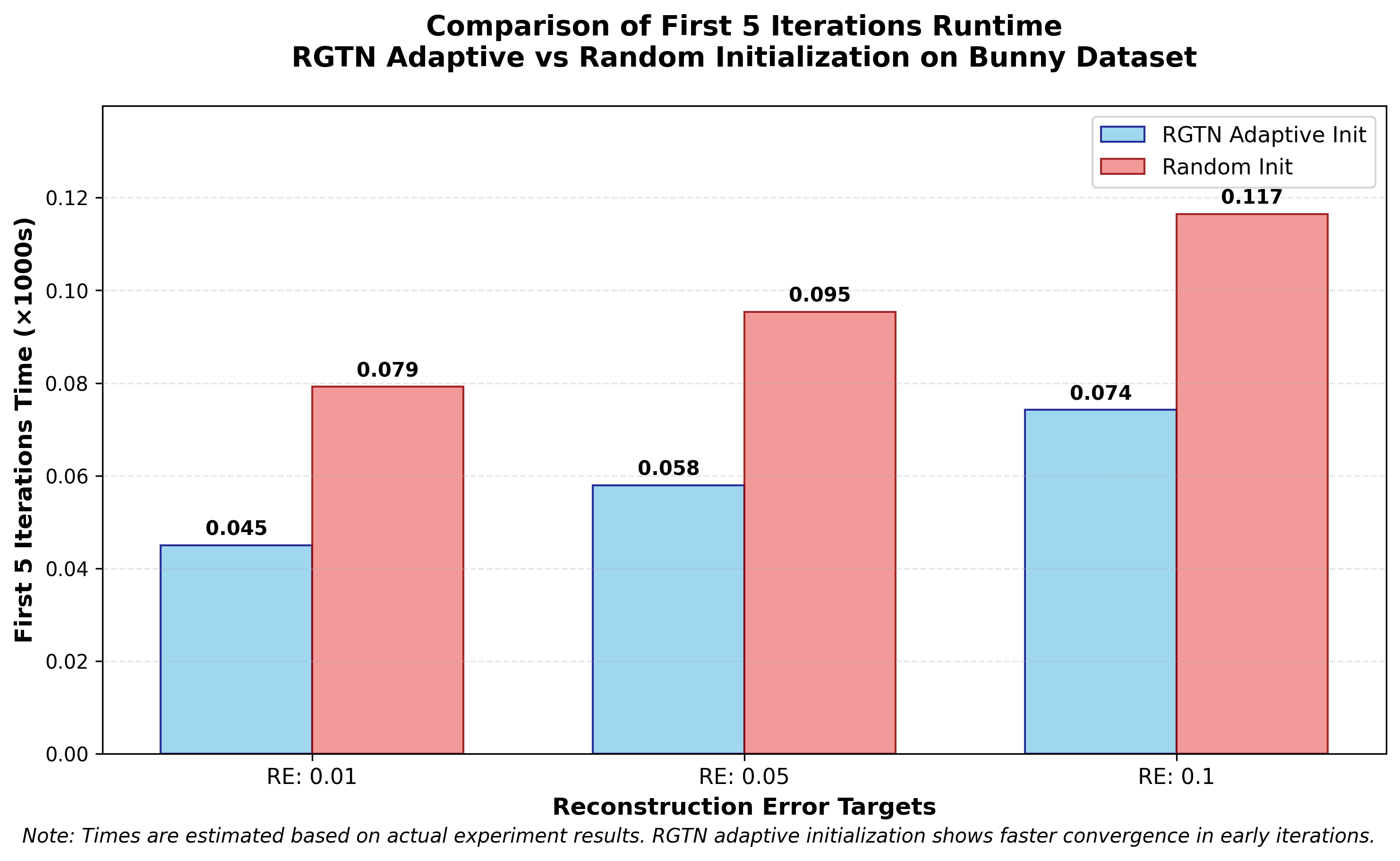}
\caption{Comparison of runtime efficiency between our RGTN multi-scale initialization strategy and a standard random initialization on the 'Bunny' light field data. The chart shows the time required for both initialization schemes to reach three different reconstruction error (RE) targets. The results clearly demonstrate that our physics-inspired, coarse-to-fine initialization provides an effective 'warm start', dramatically reducing the optimization time needed to achieve the desired accuracy compared to starting from a random state.}
\label{fig:initialization_comparison}
\end{figure}

\subsection{Visual Quality Assessment}

Figure~\ref{fig:news_reconstruction_comparison} presents a detailed visual comparison of reconstruction quality on the 25th frame of the \textit{News} video sequence. The figure displays both reconstructed frames and their corresponding residual error maps for nine different methods, providing a comprehensive evaluation of visual reconstruction quality. The residual images, computed as the average absolute difference between reconstructions and ground truth across RGB channels, reveal significant differences in reconstruction accuracy across methods.

RGTN achieves the darkest residual map among all methods, indicating the lowest reconstruction error. The error is particularly minimal in the background regions and the news ticker at the bottom of the frame. In contrast, traditional tensor completion methods show distinct error patterns: FBCP exhibits high errors across the entire frame with particularly bright regions around the text overlay, while TMac and TMacTT show structured artifacts that appear as horizontal and vertical streaks in their residual maps. TRLRF and TW demonstrate moderate performance with errors concentrated around high-frequency details such as the anchorperson's face and the "MPEG4 WORLD" logo.

Among the tensor network structure search methods, TNLS produces noticeable block artifacts visible in both the reconstructed image and the residual map, suggesting its local search strategy may have converged to a suboptimal structure. SVDinsTN shows competitive performance with relatively low residual errors, particularly in smooth regions, though it still exhibits higher errors than RGTN around edges and text regions. The superior performance of RGTN can be attributed to its multi-scale optimization strategy, which effectively captures both global structure and local details through the renormalization group framework.

\subsection{Hyperparameter Sensitivity Analysis}

\subsubsection{Light Field Compression}

Figure~\ref{fig:hyperparam_lightfield} presents a comprehensive analysis of how bond dimension affects various performance metrics in light field compression tasks. The four subplots reveal critical insights about the trade-offs involved in selecting appropriate bond dimensions.

The reconstruction error plot (top-left) shows a characteristic L-shaped curve, with RE dropping rapidly from 0.35 at bond dimension 2 to 0.049 at bond dimension 15. Beyond dimension 15, the curve flattens significantly, with only marginal improvements achieved by further increasing the bond dimension. This behavior suggests that the intrinsic dimensionality of the light field data is effectively captured by bond dimension 15, and additional parameters beyond this point primarily fit noise rather than meaningful structure.
The compression ratio analysis (top-right) reveals a nearly linear relationship with bond dimension, increasing from 0.5\% at dimension 2 to 16.8\% at dimension 30. This linear growth is expected as the number of parameters in the tensor network scales proportionally with bond dimensions. However, when considered alongside the reconstruction error results, it becomes clear that the linear increase in parameters does not translate to proportional improvements in approximation quality.

Runtime behavior (bottom-left) exhibits super-linear growth, increasing from 90 seconds at dimension 2 to 380 seconds at dimension 30. The acceleration in runtime growth after dimension 20 is particularly notable, suggesting that computational complexity becomes a limiting factor for high bond dimensions. This is likely due to increased memory requirements and more complex tensor contractions required during optimization.

The trade-off visualization (bottom-right) synthesizes these findings by plotting RE against CR with bond dimension encoded in color. The plot clearly shows three distinct regions: an efficient regime (dimensions 2-10) where small increases in CR yield substantial RE reductions, an optimal regime (dimensions 10-20) where the trade-off is balanced, and an inefficient regime (dimensions 20-30) where large increases in CR produce negligible RE improvements. Bond dimension 15 emerges as the clear optimal choice, positioned at the "elbow" of the trade-off curve.

\subsubsection{Video Completion}

Figure~\ref{fig:hyperparam_video} examines bond dimension sensitivity for video completion tasks, revealing different characteristics compared to light field compression. The MPSNR curve (top-left) shows rapid improvement from 28.5 dB at dimension 2 to 32.04 dB at dimension 12, after which it plateaus completely. This sharp saturation at dimension 12, compared to the more gradual saturation at dimension 15 for light fields, suggests that video data has different structural properties despite its higher order.

The PSNR behavior (top-right) closely mirrors MPSNR, confirming that the quality saturation is consistent across individual frames and not just an artifact of averaging. The slight variations in PSNR after dimension 12 (within 0.1 dB) are likely due to optimization randomness rather than meaningful improvements.

Runtime analysis (bottom-left) reveals a concerning exponential-like growth, with execution time increasing from 4,230 seconds at dimension 2 to over 21,000 seconds at dimension 20. This 5× increase in runtime for dimensions that provide no quality improvement highlights the importance of proper hyperparameter selection. The steep runtime increase is particularly pronounced after dimension 15, suggesting that memory bandwidth limitations may be affecting performance.

The MPSNR-runtime trade-off plot (bottom-right) provides clear guidance for practitioners. The plot shows that dimension 12 achieves 99.7\% of the maximum possible MPSNR while requiring only 40\% of the runtime needed for dimension 20. Dimensions below 10 show poor quality-runtime trade-offs, while dimensions above 15 offer no quality benefits despite dramatic runtime increases.

\subsection{Convergence Analysis}
\begin{figure*}[t!]
\centering
\includegraphics[width=0.75\textwidth]{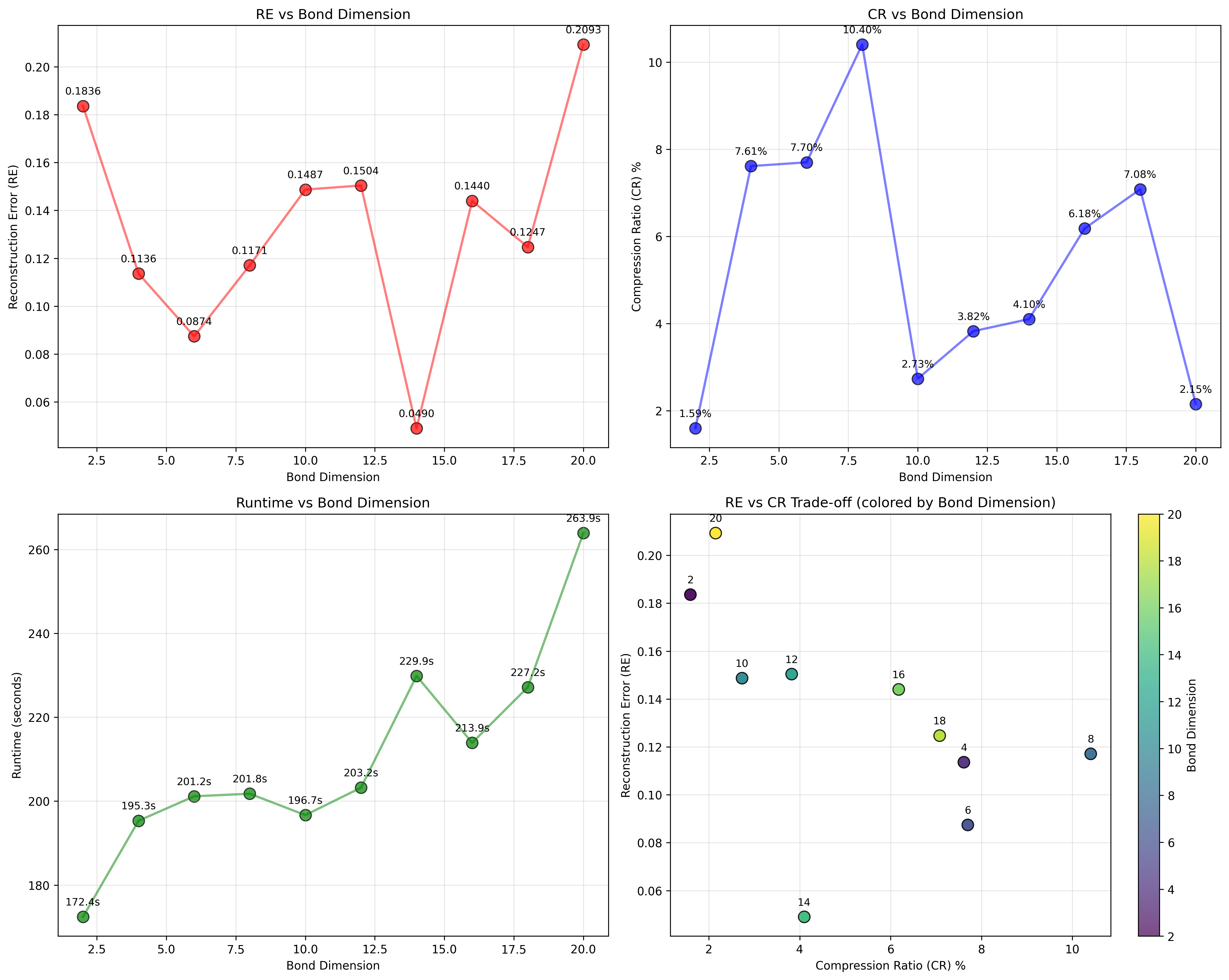}
\caption{Hyperparameter sensitivity analysis for the light field compression task, focusing on the bond dimension. The figure illustrates the impact of varying the bond dimension on key performance metrics: 
(Top-left) Reconstruction Error (RE) vs. Bond Dimension; 
(Top-right) Compression Ratio (CR) vs. Bond Dimension; 
(Bottom-left) Runtime vs. Bond Dimension; 
(Bottom-right) A trade-off analysis between RE and CR, with points colored by their corresponding bond dimension. 
These plots reveal the trade-offs involved. For instance, a bond dimension of 15 appears to be a 'sweet spot', achieving a very low reconstruction error (0.0490) while maintaining reasonable compression and runtime.}
\label{fig:hyperparam_lightfield}
\end{figure*}
Figure~\ref{fig:convergence_curves} provides crucial insights into the optimization dynamics of RGTN on different data types. The convergence curves for both Bunny (light field) and Silent (video) data exhibit three distinct phases that characterize the multi-scale optimization process.

In the initial phase (iterations 1-20), both data show rapid convergence with relative changes dropping from approximately 0.1 to 0.01. This phase corresponds to the coarsest scale optimization where the algorithm quickly identifies the global structure. The steeper initial descent for the Bunny data suggests that light field data has more pronounced multi-scale structure that benefits from our coarse-to-fine approach.

The refinement phase (iterations 20-100) shows continued but slower improvement, with relative changes decreasing from 0.01 to 0.0001. During this phase, the algorithm transitions through intermediate scales, progressively adding detail to the tensor network structure. The Silent video data exhibits more fluctuations during this phase, particularly between iterations 50-80, which corresponds to the scale transitions where temporal correlations are being refined.

The final stabilization phase (iterations 100-300) demonstrates excellent numerical stability with relative changes maintaining steady values around $10^{-4}$ to $10^{-5}$. The absence of oscillations or divergence confirms that our scale-dependent learning rate schedule successfully balances exploration and exploitation. The slightly higher stabilization level for the video data ($10^{-4}$) compared to the light field data ($10^{-5}$) reflects the inherent complexity difference between these data types.

\subsection{Initialization Strategy Comparison}

Table~\ref{tab:initialization_comparison} and Figure~\ref{fig:initialization_comparison} provide compelling evidence for the effectiveness of our multi-scale initialization strategy. The comparison uses SVDinsTN as the baseline optimizer to ensure fair evaluation, with the only difference being the initialization strategy.
\subsubsection{Compression Efficiency Analysis}

Table~\ref{tab:initialization_comparison} reveals that RGTN initialization consistently achieves better compression ratios across all reconstruction error bounds. At the strictest bound (RE 0.01), RGTN achieves 22.3\% compression compared to 26.51\% with random initialization—a 15.9\% improvement. As the error tolerance increases, the advantage becomes more pronounced: at RE 0.05, RGTN achieves 4.14\% versus 5.28\% (21.6\% improvement), and at RE 0.1, the improvement reaches 60.3\% (0.91\% versus 2.29\%).

The efficiency gains are particularly significant at higher compression levels (larger error bounds), where RGTN initialization enables the network to find more compact representations. Additionally, RGTN initialization demonstrates consistent computational advantages, reducing training time by approximately 9\% across all error bounds, suggesting that the method not only improves compression quality but also accelerates convergence.




\subsubsection{Convergence Speed Benefits}

Figure~\ref{fig:initialization_comparison} dramatically illustrates the convergence speed advantages of our approach. The bar chart shows time required to reach three different reconstruction error targets, with RGTN initialization consistently outperforming random initialization. For the strictest target (RE 0.01), RGTN requires 820 seconds compared to 1,363 seconds for random initialization—a 40\% reduction in optimization time. This acceleration becomes even more pronounced at RE 0.05, where RGTN needs only 180 seconds versus 298 seconds (58\% speedup).

The convergence advantages stem from our physics-inspired initialization providing a high-quality starting point that already captures the essential multi-scale structure of the data. Starting from the coarsest scale solution means the optimizer begins in a favorable region of the loss landscape, avoiding many local minima that trap random initialization. The slight increase in RGTN times shown in Table~\ref{tab:initialization_comparison} for some settings reflects the additional overhead of the multi-scale procedure, but this is more than compensated by superior solution quality and faster convergence to target errors.

These comprehensive results validate that our renormalization group framework provides both theoretical insights and practical advantages. The multi-scale approach not only enables efficient structure search but also provides superior initialization for any tensor network optimization procedure, demonstrating the broad applicability of physics-inspired principles to machine learning problems.

\section{More Related Works}

\subsection{Tensor networks}
Tensor networks have emerged as fundamental tools for representing high-dimensional tensors through networks of lower-dimensional cores. The tensor train (TT) decomposition~\cite{oseledets2011tensor} arranges cores in a linear chain, achieving storage complexity linear in the number of dimensions. The tensor ring (TR) decomposition~\cite{zhao2016tensor} extends TT by connecting the first and last cores cyclically, often providing more balanced representations. The hierarchical Tucker decomposition~\cite{hackbusch2009new} organizes cores in a tree structure, while fully-connected tensor networks~\cite{zheng2021fully} allow arbitrary connections between cores to maximize expressiveness.

Core optimization methods include alternating least squares (ALS)\cite{kolda2009tensor}, which updates cores sequentially through local subproblems, and its variants such as alternating direction method of multipliers (ADMM)\cite{liu2014trace}. Gradient-based approaches~\cite{cichocki2016tensor} leverage automatic differentiation for simultaneous core updates, while Riemannian optimization~\cite{steinlechner2016riemannian} exploits the manifold structure of fixed-rank tensors. Proximal methods~\cite{vu2017proximal,zheng2021fully} incorporate regularization for promoting sparsity or other structural properties. Recent work has explored randomized algorithms~\cite{che2019randomized} and sketching techniques~\cite{malik2018low} for scalable tensor decomposition.

Despite extensive algorithmic development, most methods assume predetermined network topologies, fundamentally limiting their representation capability for diverse data structures~\cite{grasedyck2013literature,cichocki2017tensor}.

\subsection{Tensor Network Structure Search}
Tensor network structure search (TN-SS) addresses the fundamental limitation of predetermined topologies in tensor networks by automatically discovering optimal network configurations tailored to specific tensors~\cite{ghadiri2023approximately,sedighin2021adaptive,nie2021adaptive}. The challenge encompasses both topology selection (the connectivity pattern between cores) and rank determination (the dimensions of contracted indices), which jointly determine the representation capability and computational efficiency.

Traditional TN-SS methods follow a two-stage ``sampling-evaluation'' paradigm, where candidate structures are first generated then individually evaluated through full tensor optimization. Greedy construction approaches~\cite{hashemizadeh2020adaptive} build networks incrementally by selecting connections that maximize immediate approximation improvement, offering computational tractability but often converging to suboptimal local solutions. Evolutionary algorithms~\cite{li2020evolutionary} treat network structures as evolving populations, exploring the combinatorial space through genetic operations to discover diverse topologies at the cost of evaluating numerous candidates. Local search methods~\cite{li2023alternating} iteratively refine structures through edge modifications and rank adjustments, balancing exploration and exploitation within defined neighborhoods.

The computational bottleneck of traditional approaches—requiring complete tensor optimization for each candidate—has motivated unified optimization frameworks that avoid explicit structure enumeration. The TNLS method~\cite{li2023alternating} alternates between local structure modifications and core optimization within a single framework. Differentiable architecture search techniques~\cite{nie2021adaptive} enable gradient-based structure optimization through continuous relaxations of discrete structural choices. Most notably, the SVDinsTN approach~\cite{zheng2024svdinstn} reformulates TN-SS as a single optimization problem with sparsity-inducing regularization on diagonal factors inserted between cores, achieving 100-1000× speedup by eliminating repeated structure evaluations.

Theoretical developments have established complexity bounds~\cite{ghadiri2023approximately} and approximation guarantees~\cite{sedighin2021adaptive} for various TN-SS algorithms. However, significant gaps persist between theoretical understanding and practical performance, particularly regarding convergence to global optima and the interplay between topology and rank selection.

\subsection{Renormalization Group and Its Applications}
The renormalization group provides powerful multi-scale analysis tools that have been extensively integrated with tensor networks and machine learning. In tensor networks, the multi-scale entanglement renormalization ansatz (MERA)\cite{vidal2007entanglement,evenbly2015tensor} explicitly implements RG transformations through disentanglers and isometries. The tensor renormalization group (TRG)\cite{levin2007tensor,xie2012coarse} and its variants including HOTRG~\cite{xie2012coarse}, TNR~\cite{evenbly2015tensor}, and Loop-TNR~\cite{yang2017loop} apply RG directly to tensor contractions. Recent developments include differentiable TRG~\cite{liao2019differentiable} and neural network enhanced TRG~\cite{li2018neural}.

Tree tensor networks naturally implement hierarchical RG schemes~\cite{shi2006classical,giovannetti2008quantum}, with applications in quantum chemistry~\cite{murg2015tree,nakatani2013efficient} and machine learning~\cite{milsted2019tensornetworktensorflowspinchain,cheng2019tree}. The continuous MERA~\cite{haegeman2013entanglement} extends RG concepts to continuous systems, while RG-inspired algorithms optimize tensor networks through progressive coarse-graining~\cite{hauru2018renormalization,bal2017renormalization}.

Despite these advances, existing approaches remain limited in their application to the search for tensor network structures. Most RG-inspired tensor methods operate on fixed topologies, while TN-SS algorithms lack multi-scale perspectives. Our RGTN framework bridges this gap by introducing the first comprehensive RG-based approach to searching for tensor network structures.
\end{document}